\def\eqref#1{equation~\ref{#1}}
\def\1{\bm{1}}
\def\rvx{{\mathbf{x}}}
\def\mI{{\bm{I}}}
\DeclareMathAlphabet{\mathsfit}{\encodingdefault}{\sfdefault}{m}{sl}
\SetMathAlphabet{\mathsfit}{bold}{\encodingdefault}{\sfdefault}{bx}{n}
\def\gX{{\mathcal{X}}}
\def\sR{{\mathbb{R}}}
\newcommand{\E}{\mathbb{E}}
\newcommand{\R}{\mathbb{R}}
\newtheorem{proposition}{Proposition}
\newtheorem{lemma}{Lemma}
\newtheorem{definition}{Definition}
\newcommand{\norm}[2]{\lVert #2 \rVert_{#1}}
\title{An Efficient Framework for Crediting Data Contributors of Diffusion Models}
\author{Chris Lin\thanks{Equal contribution.}, {} Mingyu Lu\footnotemark[1], {} Chanwoo Kim, Su-In Lee \\
Paul G. Allen School of Computer Science \& Engineering \\
University of Washington \\
\texttt{\{clin25,mingyulu,chanwkim,suinlee\}@cs.washington.edu}
}
\begin{document}

\maketitle

\begin{abstract}
As diffusion models are deployed in real-world settings, and their performance is driven by training data, appraising the contribution of data contributors is crucial to creating incentives for sharing quality data and to implementing policies for data compensation. Depending on the use case, model performance corresponds to various global properties of the distribution learned by a diffusion model (e.g., overall aesthetic quality). Hence, here we address the problem of attributing global properties of diffusion models to data contributors. The Shapley value provides a principled approach to valuation by uniquely satisfying game-theoretic axioms of fairness. However, estimating Shapley values for diffusion models is computationally impractical because it requires retraining on many training data subsets corresponding to different contributors and rerunning inference. We introduce a method to efficiently retrain and rerun inference for Shapley value estimation, by leveraging model pruning and fine-tuning. We evaluate the utility of our method with three use cases: (i) image quality for a DDPM trained on a CIFAR dataset, (ii) demographic diversity for an LDM trained on CelebA-HQ, and (iii) aesthetic quality for a Stable Diffusion model LoRA-finetuned on Post-Impressionist artworks. Our results empirically demonstrate that our framework can identify important data contributors across models' global properties, outperforming existing attribution methods for diffusion models\footnote{Our code is available at \url{https://github.com/q8888620002/data_attribution}}.
\end{abstract}

\section{Introduction}\label{sec:intro}
Diffusion models have demonstrated impressive performance on image generation \citep{ho2020denoising, song2020score}, with models such as Dall$\cdot$E 2 \citep{ramesh2022hierarchical} and Stable Diffusion \citep{rombach2022high} showing versatile utilities and enabling downstream applications via customization \citep{hu2021lora, ruiz2023dreambooth}. A key driver for the performance of diffusion models is the data used for training and fine-tuning. The data for commercial diffusion models are often scraped from the internet \citep{schuhmann2022laion}, raising concerns about credit attribution for those who created the data in the first place (e.g., artists and their artworks) \citep{jiang2023ai}. Additionally, as labor is required to label and curate data to further improve model performance, demands for data labeling platforms have risen, with workers often underpaid \citep{widder2023open, wong2023america}. To create incentives for sharing quality data and implement policies for data compensation, a pressing question arises: \textit{How do we fairly credit data contributors of diffusion models?}

\textit{Data attribution}, which aims to trace machine learning model behaviors back to training data, has the potential to address the above question. Indeed, in the context of supervised learning, several works have proposed methods for valuating the contribution of individual datum to model performance such as accuracy \citep{ghorbani2019data, kwon2021beta, wang2023data}. Some recent work has developed data attribution methods for diffusion models \citep{dai2023training, georgiev2023journey, wang2023evaluating, zheng2023intriguing}. However, two gaps remain when applying these recent methods to credit data contributors of diffusion models. First, these methods focus on \textit{local} properties related to the generation of a given image. For example, \citet{zheng2023intriguing} showcase their method D-TRAK to study the changes in pixel values of particular generated images. In contrast, the performance of diffusion models is evaluated based on \textit{global} properties of the learned generative distributions. For example, the demographic diversity of generated images can be considered a global property for evaluation \citep{luccioni2023stable}. Second, existing attribution methods for diffusion models consider the contribution of each training datum instead of each data contributor, but a contributor can provide multiple data points. One reasonable approach is to aggregate the valuations of data provided by a contributor as the contributor's total contribution. However, previous work has shown this approach to incur errors between the aggregated and actual contribution \citep{koh2019accuracy}. Here, we aim to address these two gaps by attributing global properties of diffusion models to data contributors.

Attribution methods based on cooperative game theory are particularly desirable because of their axiomatic justification. Specifically, the Shapley value provides a principled approach to fairly distribute credit among contributors, since it is the unique notion that satisfies game-theoretic axioms for equitable valuation \citep{ghorbani2019data, shapley1953value}. Briefly, the Shapley value assesses each contributor based on the average gain incurred by adding the contributor's data to different contributor combinations. To estimate the Shapley values for data contributors in our setting, we need to (i) retrain diffusion models on data subsets corresponding to different combinations of contributors; and (ii) measure global properties of the retrained models by rerunning inference. However, training a diffusion model can take hundreds of GPU days, and inference can also be expensive (e.g., approximately 5 GPU days to generate 50,000 images for image quality metrics) \citep{dhariwal2021diffusion}. Therefore, estimating Shapley values with vanilla retraining and inference is computationally impractical. Here, we propose to efficiently approximate retraining and inference on retrained models through model pruning and fine-tuning, providing a framework that enables Shapley value estimation (\Cref{fig:concept}).

\begin{figure}[h]
    \centering
    \includegraphics[width=0.9\textwidth]{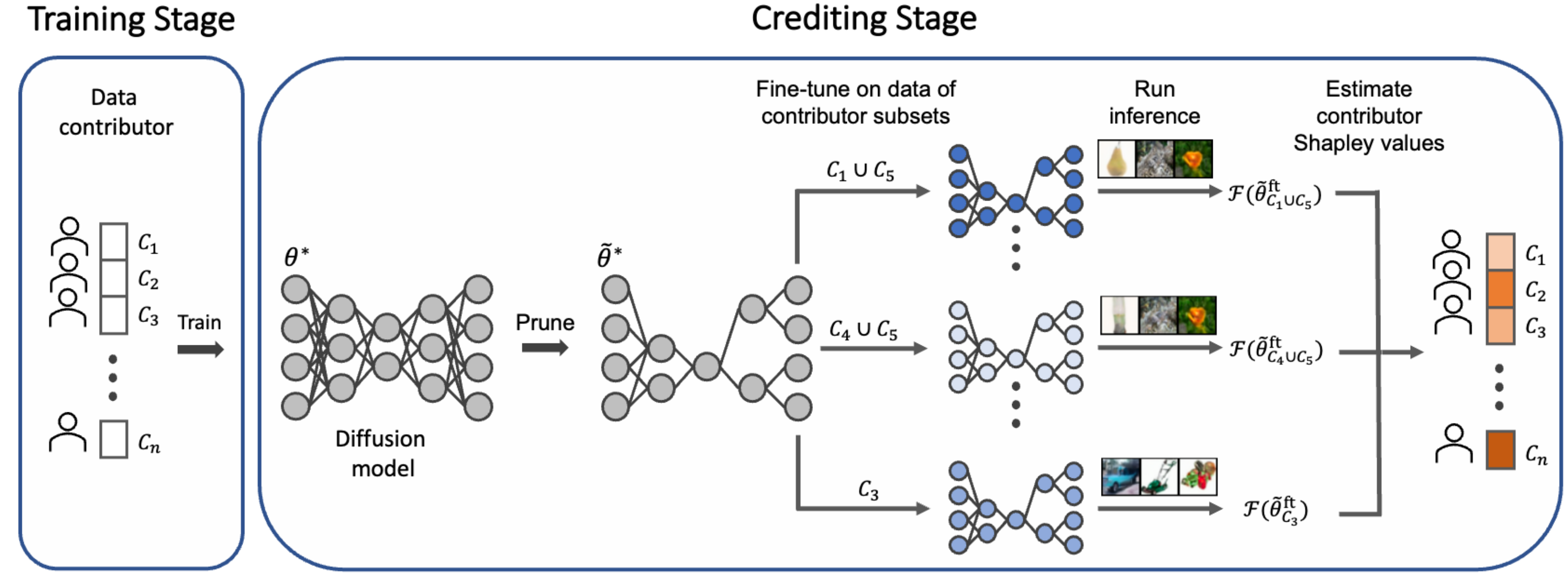}
    \caption{Schematic overview of our proposed framework, where $\theta^*$ denotes a trained diffusion model for which we aim to credit data contributors, and $\Tilde{\theta}^*$ denotes the pruned model that approximates $\theta^*$. After fine-tuning the pruned model on data corresponding to various subsets of contributors, denoted as $\Tilde{\theta}^{\text{ft}}$, and rerunning inference; global model properties ($\mathcal{F}$) are measured to estimate the Shapley value for each data contributor.}
    \label{fig:concept}
\end{figure}

\textbf{Related work.} Data attribution methods for diffusion models have been developed by recent work. Some methods require models be trained with specialized paradigms. For example, to assess the importance for a training sample, \citet{dai2023training} first train an ensemble of diffusion models on data splits, followed by ablating models trained on splits containing the specific sample. \citet{wang2023evaluating} evaluate data attribution for text-to-image models by customizing a pretrained model toward an exemplar style. In contrast, other methods can be applied to already trained models in a post hoc manner. For example, the TRAK framework has been adapted to find important training data for intermediate latents along a generative process \citep{georgiev2023journey}, while \citet{zheng2023intriguing} introduce empirical approaches that improve the performance of TRAK for diffusion models. All these methods attribute local model properties to each individual datum, whereas our work focuses on attributing global model properties to each data contributor.

\textbf{Contributions.} (1) To our knowledge, we are the first to investigate how to attribute global properties of diffusion models to data contributors. (2) We propose a framework that efficiently approximates retraining and rerunning inference for diffusion models, enabling the estimation of Shapley values for data contributors. (3) We empirically demonstrate that our framework outperforms existing attribution methods across three datasets, model architectures, and global properties.


\section{Preliminaries}
This section provides an overview for diffusion models, attributing global model properties to data contributors, and existing attribution methods for diffusion models.

\subsection{Diffusion Models}
Our research primarily focuses on discrete-time diffusion models, specifically denoising diffusion probabilistic models (DDPMs) \citep{ho2020denoising} and latent diffusion models (LDMs) \citep{rombach2022high}. Generally, diffusion models are trained to approximate a data distribution $q(\rvx_0)$. To perform learning, a training sample $\rvx_0 \sim q(\rvx_0)$ is sequentially corrupted by additive noise \citep{ho2020denoising}. This procedure is called the \textit{forward process} and is defined by $q(\rvx_t | \rvx_{t-1}) := \mathcal{N}(\rvx_t; \sqrt{1 - \beta_t} \rvx_{t-1}, \beta_t \mI)$, for $t = 1, ..., T$, where $\{\beta_t\}_{t=1}^T$ corresponds to a variance schedule. Notably, the forward process allows sampling of $\rvx_t$ at any time step $t$ from $\rvx_0$, with the closed form $q(\rvx_t | \rvx_0) = \mathcal{N}(\rvx_t ; \sqrt{\bar{\alpha}_t} \rvx_0, (1 - \bar{\alpha}_t) \mI)$, where $\alpha_t := 1 - \beta_t$ and $\bar{\alpha}_t := \prod_{s=1}^t \alpha_s$. Then, a diffusion model learns to denoise $\rvx_{1:T}$, following the \textit{reverse process} defined by $p_\theta(\rvx_{t-1} | \rvx_t) := \mathcal{N}(\rvx_{t-1}; \mu_\theta(\rvx_t, t), \sigma_t^2 \mI)$, where $\theta \in \R^d$ is the model parameters, and $\sigma_t$ corresponds to some sampling schedule ~\citep{karras2022elucidating}. Instead of modeling the conditional means $\mu_\theta$, it is standard to predict the added noises with a neural network $\epsilon_\theta$ using the reparameterization trick. The training objective corresponds to a variational bound and is formulated as
\begin{equation}
    \mathcal{L}_{\text{Simple}}(\rvx; \theta) = \E_{t,\epsilon}\left[ \Vert \epsilon - \epsilon_{\theta}(\sqrt{\bar{\alpha}_t} \rvx + \sqrt{(1- \bar{\alpha}_t)}\epsilon, t)\Vert^2_2 \right]
\end{equation}
Once a diffusion model has been trained, a new image can be generated by sampling an initial noise $\rvx_T \sim \mathcal{N}(\mathbf{0}, \mI)$ and iteratively applying $\epsilon_\theta$ at each step $t = T, ..., 1$ for denoising. 

\subsection{Attributing Global Model Properties to Data Contributors}\label{sec:attribution_diffusion}

To gain a comprehensive understanding of machine learning models, it is often beneficial to study global model properties \citep{covert2020feature}, which assess the model’s performance across  samples, rather than focusing on individual outputs. For example, in the setting of supervised learning, global model property can be test accuracy. In the context of generative models, it can be a quality metric for generated samples. For example, in image generation, the global model property of interest can be the Inception Score \citep{goodfellow2014explaining} or FID \citep{heusel2017gans}. More formally, a global model property for generative models is defined by any application-specific function $\mathcal{F}: \Theta \rightarrow \mathbb{R}$, which maps a generative model to a scalar value, thereby quantifying the overall distribution learned by the model. Building on this, we introduce the problem of attributing global model properties to data contributors. The goal of contributor attribution is to identify important groups of training data for a model's global properties, where each group of data is provided by a contributor. More formally, we have the following definition.

\begin{definition}\label{def:data_attribution}
    (Contributor attribution) Consider $n$ contributors of training samples $\mathcal{D} = \{\mathcal{C}_1, \mathcal{C}_2, ..., \mathcal{C}_n\}$, with the $i$th contributor providing a set of data points denoted as $\mathcal{C}_i$, and a global model property function $\mathcal{F}$. A contributor attribution method is a function $\tau(\mathcal{F}, \{ \mathcal{C}_i\}_{i=1}^n)$ that assigns scores to all contributors to indicate each contributor's importance to the global model property $\mathcal{F}$.
\end{definition}
%


\subsection{Existing Attribution Methods for Diffusion Models}
In the context of diffusion models, recent work has focused on attributing local model properties to each datum by adapting the TRAK framework \citep{park2023trak}, such as D-TRAK \citep{zheng2023intriguing} and Journey-TRAK \citep{georgiev2023journey}. Formally, let $\gX$ denote the input space and $\Theta$ the parameter space of a diffusion model. Suppose there are $N$ training data points $\{\rvx^{(1)}, \rvx^{(2)}, ..., \rvx^{(N)}\}$, and $\Tilde{\rvx}$ is the generated image of interest for local attribution. Given a loss function $\mathcal{L}: \gX \times \Theta \rightarrow \mathbb{R}$, a local model property $f: \gX \times \Theta \rightarrow \mathbb{R}$, and models $\{\theta^*_s\}_{s=1}^S$ trained on $S$ data subsets\footnote{In practice, we consider the computationally efficient retraining-free setting by \citet{zheng2023intriguing}. That is, $S = 1$, and $\theta^*_1 = \theta^*$ is the original model trained on the entire dataset.}, TRAK for diffusion models is defined as:
\begin{equation}\label{eq:trak}
    \frac{1}{S}\sum_{s=1}^S \Phi_s \left(\Phi_s^\top \Phi_s + \lambda I \right)^{-1} \gamma_s(\Tilde{\rvx}),\text{ and}
\end{equation}
\begin{equation}
    \Phi_s = \left[\phi_s(\rvx^{(1)}) ,\ldots, \phi_s(\rvx^{(N)}) \right]^\top,
\end{equation}

where $\phi_s(\rvx) = \mathcal{P}_s^\top \nabla_{\theta}\mathcal{L}(\rvx; \theta_{s}^*)$, $\gamma_s(\rvx) = \mathcal{P}_s^\top \nabla_{\theta}f(\rvx; \theta_{s}^*)$, $\mathcal{P}_s$ is a random projection matrix, and $\lambda I$ serves for numerical stability and regularization. More details on Journey-TRAK \citep{georgiev2023journey} and D-TRAK \citep{zheng2023intriguing} are in \Cref{apx:baselines}. 

\section{Crediting Data Contributors of Diffusion Models Using the Shapley Value}\label{sed:method}
Here, we motivate the use of the Shapley value for crediting data contributors of diffusion models. While estimating Shapley values for diffusion models require computationally expensive retraining and inference, we propose to address the computational challenge with model pruning and fine-tuning.

\subsection{The Shapley Value for Contributor Attribution}\label{sec:data_shapley}

The Shapley value was developed in cooperative game theory to fairly attribute credit in coalition games \citep{shapley1953value}. In the context of contributor attribution, for training data with $n$ contributors, the Shapley value attributed to the $i$th contributor is defined as:
\begin{equation}\label{eq:shapley_value}
    \beta_i = \frac{1}{n} \sum_{S \subseteq D\setminus \mathcal{C}_i} {\binom{n-1}{|S|}}^{-1}  \left(\mathcal{F} (\theta^*_{S \cup \mathcal{C}_i}) - \mathcal{F} (\theta^*_{S})\right)
\end{equation}
where $\theta^*_{S \cup \mathcal{C}_i}$ and $\theta^*_S$ are models trained on the subsets $S \cup \mathcal{C}_i$ and $S$, respectively. 
The Shapley value appraises each contributor's contribution based on the weighted \textit{marginal contribution}, $\mathcal{F} (\theta^*_{S \cup \mathcal{C}_i}) - \mathcal{F} (\theta^*_{S})$. It satisfies axioms desirable for equitable attribution, including \textit{linearity}, \textit{dummy player}, \textit{symmetry}, and \textit{efficiency} (see \citet{ghorbani2019data} for a succinct summary). There are arguments that the efficiency axiom, $\sum_{i=1}^n {\beta_i} = \mathcal{F} (\theta^*) - \mathcal{F} (\theta_{\emptyset})$, where $\theta^*$ denotes the model trained on the entire dataset, and $\theta_{\emptyset}$ denotes the model without training, may not always be essential in use cases where the primary goal is to remove detrimental data points \citep{wang2023data}. In that setting, ranking is more important than exact attribution values. However, in the context of diffusion models, particularly in applications involving monetary or credit allocation related to model performance, having the total attribution matches the model utility can directly quantify reward for each contributor.

Despite its advantages, evaluating the \textit{exact} Shapley value is challenging as it involves retraining $2^n$ (all possible subsets) models and computing their corresponding model properties. Fortunately, many sampling-based estimators for Shapley values have been developed \citep{lundberg2017unified, ghorbani2019data, vstrumbelj2010efficient}. In particular, we adopt KernelSHAP in the feature attribution literature for contributor attribution, by solving a weighted least squares problem with sampling \citep{lundberg2017unified, covert2021improving}: 
\begin{equation}\label{eq:kernel_shap}
    \hat{\beta} = \min_{\beta_0, \ldots, \beta_n} \frac{1}{M} \sum_{j=1}^M \left[\mathcal{F}(\theta_{\emptyset}) + \mathbf{1}_{S_j}^\top\beta - \mathcal{F}(\theta^*_{S_j})\right]^2
    \quad 
    \text{s.t.} \quad \mathbf{1}^\top   \beta = \mathcal{F}(\theta^*) - \mathcal{F}(\theta_{\emptyset})
\end{equation}
where $S_j$ is sampled following the distribution $\mu(S) \propto \frac{n-1}{\binom{n}{|S|} |S| (n - |S|) }$ for $1 < \mathbf{1}_{S}^\top\beta < n$, and $\mathbf{1}_{S}$ is an indicator vector representing the presence of contributors in $S$. By solving the least squares problem with the constraint, a closed-form solution can be derived \citep{covert2021improving}. 
The obtained parameters $\hat{\beta}_1, ..., \hat{\beta}_n$ are the contributor attribution scores.

\subsection{Speed Up Retraining and Inference with Sparsified Fine-Tuning}\label{sec:speed_up_with_sft}
Although using the Shapley value for contributor attribution is well motivated, and existing sampling approach can be used, the main challenge is in computing $\mathcal{F}(\theta^*_{S_j})$. First, a diffusion model needs to be trained again from random initialization on the subset $S_j$ to obtain $\theta^*_{S_j}$. Then inference needs to be rerun on the retrained model $\theta^*_{S_j}$ to measure $\mathcal{F}(\theta^*_{S_j})$. For diffusion models, both retraining and rerunning inference can take multiple GPU days \citep{dhariwal2021diffusion}, making the estimator in Equation (\ref{eq:kernel_shap}) computationally impractical.

To address this computational challenge, we propose to speed up retraining and rerunning inference  with \textit{sparsified fine-tuning} (\Cref{fig:concept}). Sparsified fine-tuning enhances the efficiency of both retraining and inference, by reducing the number of model parameters through pruning \citep{jia2023model}. Specifically, $\theta^*$ is pruned and initially fine-tuned on the full dataset $\mathcal{D}$ to obtain a performant pruned model $\Tilde{\theta}^*$ that approximates $\theta^*$. To furthermore improve the retraining efficiency, $\Tilde{\theta}^*$ is fine-tuned on each subset $S_j$ for $k$ steps to obtain $\Tilde{\theta}^{\text{ft}}_{S_j, k}$, instead of retraining the pruned model on $S_j$ from random initialization. The diffusion loss objective used to train the original model $\theta^*$ is used for all fine-tuning. Overall, sparsified fine-tuning aims to efficiently achieve
\begin{equation}\label{eq:model_behavior_approx}
    \mathcal{F}(\Tilde{\theta}^{\text{ft}}_{S_j, k}) \textit{ (sparsified fine-tuning)} \approx \mathcal{F}(\theta^*_{S_j}) \textit{ (retraining full model from scratch)}.
\end{equation}
In other words, under the same computational budget and compared to retaining the full model from scratch, sparsified fine-tuning can increase the number of sampled subsets $S_j$ for estimating Shapley values, thus making feasible a framework for crediting data contributors of diffusion models.

In practice, computing $\mathcal{F}(\theta)$ requires sampling $N$ initial noises $\rvx^{(r)}_T \sim \mathcal{N}(\mathbf{0}, \mI)$, for $r = 1, ..., N$, denoising those noises into generated samples with the denoising network $\epsilon_{\theta}$, and computing some quantity over the generated samples (e.g., FID). Hence, to formally analyze the approximation in  \Cref{eq:model_behavior_approx}, we consider the expected error\footnote{We consider the setting where the same $\{\rvx_T^{(r)}\}_{r=1}^N$ are inputs for both $\epsilon_{\Tilde{\theta}^{\text{ft}}_{S_j, k}}$ and $\epsilon_{\theta^*_{S_j}}$ to represent the use case of generating samples with the same random seed across multiple inference runs.} $\E_{\{\rvx^{(r)}_T\}_{r=1}^N}[|\mathcal{F}(\Tilde{\theta}^{\text{ft}}_{S_j, k}) - \mathcal{F}(\theta^*_{S_j})|]$. For notational ease, the dependency of $\mathcal{F}$ on $\{\rvx^{(r)}_T\}_{r=1}^N$ is omitted, and all expectations are taken with respect to $\{\rvx^{(r)}_T\}_{r=1}^N$ for the rest of the paper unless otherwise noted. We then have the following proposition to formalize the intuition behind \Cref{eq:model_behavior_approx}.

\begin{proposition}\label{prop:model_behavior_approx}
    Suppose an objective function $\ell: \mathbb{R}^d \mapsto \mathbb{R}$ on the data provided by a given subset of data contributors $S$ is convex and differentiable, and that its gradient is Lipschitz-continuous with some constant $L > 0$, i.e.
    \begin{equation*}
        \norm{2}{\nabla \ell(\theta_1) - \nabla \ell(\theta_2)} \le L \norm{2}{\theta_1 - \theta_2}
    \end{equation*}
    for any $\theta_1, \theta_2 \in \mathbb{R}^d$. Let $\Tilde{\theta}^{\text{ft}}_{S, k}$ denote a pruned model after $k$ fine-tuning steps on the given subset $S$ with learning rate $\alpha \le 1 / L$, $\Tilde{\theta}^*_S$ the optimal pruned model trained on $S$ with the same sparsity structure as $\Tilde{\theta}^{\text{ft}}_{S, k}$, and $\theta^*_S$ the optimal full-parameter model trained on $S$. Furthermore, assume that $\E [|\mathcal{F}(\Tilde{\theta}^*_S) - \mathcal{F}(\theta^*_S)|]\le B$ for some constant $B$. Then the expected error $\E[|\mathcal{F}(\Tilde{\theta}^{\text{ft}}_{S, k}) - \mathcal{F}(\theta^*_S)|] \le B$ as $k \to \infty$.
\end{proposition}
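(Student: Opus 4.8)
The plan is to split the target quantity through the optimal pruned model $\Tilde{\theta}^*_S$, so that fine-tuning controls one piece and the assumed bound $B$ controls the other. By the triangle inequality,
\begin{equation*}
\E\!\left[|\mathcal{F}(\Tilde{\theta}^{\text{ft}}_{S,k}) - \mathcal{F}(\theta^*_S)|\right] \le \E\!\left[|\mathcal{F}(\Tilde{\theta}^{\text{ft}}_{S,k}) - \mathcal{F}(\Tilde{\theta}^*_S)|\right] + \E\!\left[|\mathcal{F}(\Tilde{\theta}^*_S) - \mathcal{F}(\theta^*_S)|\right],
\end{equation*}
and the rightmost term is at most $B$ by hypothesis. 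It therefore suffices to show the middle term tends to $0$ as $k \to \infty$.

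First I would establish parameter convergence of the fine-tuning iterates. Since pruning fixes a coordinate subspace, the fine-tuning dynamics are exactly gradient descent on the restriction of $\ell$ to that subspace, and this restriction inherits both convexity and $L$-smoothness from $\ell$. Invoking the standard convergence theory for gradient descent on a convex, $L$-smooth objective with step size $\alpha \le 1/L$ — where the iterates are Fej\'er-monotone with respect to the set of minimizers and the gradient norm $\norm{2}{\nabla \ell(\Tilde{\theta}^{\text{ft}}_{S,k})}$ vanishes — I would conclude that $\Tilde{\theta}^{\text{ft}}_{S,k} \to \Tilde{\theta}^*_S$ as $k \to \infty$, where $\Tilde{\theta}^*_S$ is the minimizer of $\ell$ within the pruned subspace. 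Note the initialization (the pruned model $\Tilde{\theta}^*$ fine-tuned on the full data) is immaterial to the limit.

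Next I would transfer this to convergence of the global property. Treating $\mathcal{F}$ as a continuous function of $\theta$ for each fixed set of sampling noises $\{\rvx_T^{(r)}\}_{r=1}^N$ — which holds because the denoising map $\epsilon_\theta$ and the iterative sampler are continuous in $\theta$ and $\mathcal{F}$ is a continuous functional of the generated samples — parameter convergence gives $\mathcal{F}(\Tilde{\theta}^{\text{ft}}_{S,k}) \to \mathcal{F}(\Tilde{\theta}^*_S)$ pointwise in the noise. Passing this through the expectation by dominated convergence (using boundedness of $\mathcal{F}$ over the relevant parameter range) yields $\E[|\mathcal{F}(\Tilde{\theta}^{\text{ft}}_{S,k}) - \mathcal{F}(\Tilde{\theta}^*_S)|] \to 0$, and combining with the decomposition above gives $\lim_{k\to\infty} \E[|\mathcal{F}(\Tilde{\theta}^{\text{ft}}_{S,k}) - \mathcal{F}(\theta^*_S)|] \le B$.

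The main obstacle I anticipate is the iterate-convergence step: for merely convex (rather than strongly convex) $\ell$, the classical $O(1/k)$ guarantee controls only function values, not the iterates themselves, so I must either assume the pruned minimizer $\Tilde{\theta}^*_S$ is unique or invoke the Fej\'er-monotonicity argument to pin the sequence down to a single limit. A secondary point is justifying continuity of $\mathcal{F}$ in $\theta$ and the exchange of limit and expectation over the sampling noise; both are mild given the smoothness of the sampler, but they are precisely the assumptions doing the real work behind the clean statement.
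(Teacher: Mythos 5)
Your proposal is correct and follows the same skeleton as the paper's proof: the identical triangle-inequality decomposition through $\Tilde{\theta}^*_S$, with the second term bounded by $B$ by assumption, and the first term handled by standard convergence of gradient descent for convex, $L$-smooth objectives with step size $\alpha \le 1/L$ (the paper proves this as its Lemma~\ref{lemma:grad_descent_convergence}, and notes, as you do, that the pruned iterates live in $\mathbb{R}^d$ with zeros in the pruned coordinates, so the assumptions on $\ell$ still apply). Where you differ is in the passage from optimization convergence to convergence of $\mathcal{F}$: the paper takes $k \to \infty$ in the $O(1/k)$ function-value bound and asserts that $\ell(\Tilde{\theta}^{\text{ft}}_{S,k}) = \ell(\Tilde{\theta}^*_S)$ forces $\Tilde{\theta}^{\text{ft}}_{S,k} = \Tilde{\theta}^*_S$ ``because $\ell$ is convex,'' then substitutes into $\mathcal{F}$; you instead argue iterate convergence via Fej\'er monotonicity and then transfer to $\mathcal{F}$ by continuity and dominated convergence. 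Your route is the more careful one: mere convexity does not let one infer parameter equality from equal objective values (the minimizer set can be a nontrivial convex set), which is exactly the gap you flag and patch, and the continuity of $\mathcal{F}$ in $\theta$ that you make explicit is also silently needed by the paper's substitution step. In short: same decomposition, same key lemma, but your version makes explicit (and repairs) two steps that the paper elides.
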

The proof is in \Cref{apx:proofs}. The assumption that $\ell$ is convex and differentiable with Lipschitz-continuous gradient is a standard setting for the theoretical analysis of approximating retraining with fine-tuning, instantiated as a quadratic objective\footnote{A quadratic objective has the form $\ell(\theta) = (\theta - B)^\top A (\theta - B)$, with Lipschitz-continuous gradient such that $\norm{2}{\nabla \ell(\theta_1) - \nabla \ell(\theta_2)} \le 2 \sigma_{\max}(A) \cdot \norm{2}{\theta_1 - \theta_2}$, where $\sigma_{\max}(A)$ is the maximum singular value of $A$.} in \citet{golatkar2020eternal} and \citet{georgiev2024attribute}. The assumption that $\E [|\mathcal{F}(\Tilde{\theta}^*_S) - \mathcal{F}(\theta^*_S)|]\le B$ corresponds to the \textit{lottery ticket hypothesis} that a sparser neural network can approximate the performance of a dense, full-parameter network \citep{frankle2018lottery}, which has been shown to hold for diffusion models \citep{fang2023structural}. The main takeaway from \Cref{prop:model_behavior_approx} is that more steps of sparsified fine-tuning should lead to a bounded approximation error between $\mathcal{F}(\Tilde{\theta}^{\text{ft}}_{S_j, k})$ and $\mathcal{F}(\theta^*_{S_j})$ in \Cref{eq:model_behavior_approx}. We further relate \Cref{prop:model_behavior_approx} to the expected error in Shapley values with the following proposition.

\begin{proposition}\label{cor:shapley_value_approx}
    Let $\Tilde{\beta}^{\text{ft}}_k \in \mathbb{R}^n$ be the Shapley values for the data contributors evaluated with $\{\mathcal{F}(\Tilde{\theta}^{\text{ft}}_{S, k})\}_{S \in 2^{\mathcal{D}}}$, and $\beta^* \in \mathbb{R}^n$ the Shapley values evaluated with $\{\mathcal{F}(\theta^*_S)\}_{S \in 2^{\mathcal{D}}}$. Suppose the assumptions on $\ell$ in \Cref{prop:model_behavior_approx} hold for all subsets of data contributors. Furthermore, assume that $\E[\max_{S \in 2^{\mathcal{D}}} |\mathcal{F}(\Tilde{\theta}^*_S) - \mathcal{F}(\theta^*_S)|] \le C$. Then $\E[\norm{2}{\Tilde{\beta}^{\text{ft}}_k - \beta^*}] \le 2\sqrt{n} C$ as $k \to \infty$.
\end{proposition}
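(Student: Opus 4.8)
The plan is to exploit the fact that the Shapley value in \Cref{eq:shapley_value} is a \emph{linear functional} of the utility values $\{\mathcal{F}(\theta_S)\}_{S \in 2^{\mathcal{D}}}$, so that the gap between the two Shapley vectors is governed entirely by the per-subset errors $\delta_S := \mathcal{F}(\Tilde{\theta}^{\text{ft}}_{S, k}) - \mathcal{F}(\theta^*_S)$ already controlled by \Cref{prop:model_behavior_approx}. Subtracting the defining formulas coordinatewise, the $i$th entry of the difference is
\begin{equation*}
    \Tilde{\beta}^{\text{ft}}_{k,i} - \beta^*_i = \frac{1}{n} \sum_{S \subseteq \mathcal{D} \setminus \mathcal{C}_i} \binom{n-1}{|S|}^{-1} \left( \delta_{S \cup \mathcal{C}_i} - \delta_S \right),
\end{equation*}
because the two marginal-contribution terms share identical weights and differ only through replacing $\mathcal{F}(\theta^*_{\cdot})$ by $\mathcal{F}(\Tilde{\theta}^{\text{ft}}_{\cdot, k})$.

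First I would derive a pointwise bound (in the shared noises $\{\rvx^{(r)}_T\}_{r=1}^N$) on each coordinate. The triangle inequality together with $|\delta_{S'}| \le \max_{S' \in 2^{\mathcal{D}}} |\delta_{S'}|$ yields
\begin{equation*}
    |\Tilde{\beta}^{\text{ft}}_{k,i} - \beta^*_i| \le \frac{2}{n}\left(\max_{S' \in 2^{\mathcal{D}}} |\delta_{S'}|\right) \sum_{S \subseteq \mathcal{D} \setminus \mathcal{C}_i} \binom{n-1}{|S|}^{-1}.
\end{equation*}
The combinatorial core is the identity $\sum_{S \subseteq \mathcal{D} \setminus \mathcal{C}_i} \binom{n-1}{|S|}^{-1} = \sum_{s=0}^{n-1} \binom{n-1}{s}\binom{n-1}{s}^{-1} = n$, obtained by grouping the $\binom{n-1}{s}$ subsets of each size $s$. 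Hence $|\Tilde{\beta}^{\text{ft}}_{k,i} - \beta^*_i| \le 2 \max_{S'} |\delta_{S'}|$ uniformly in $i$, and summing the squares over the $n$ coordinates gives the deterministic bound
\begin{equation*}
    \norm{2}{\Tilde{\beta}^{\text{ft}}_k - \beta^*} \le 2\sqrt{n} \max_{S \in 2^{\mathcal{D}}} |\delta_S|,
\end{equation*}
where the $\sqrt{n}$ is exactly the cost of converting a coordinatewise ($\ell^\infty$-style) bound into an $\ell^2$ bound.

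Next I would take expectations over $\{\rvx^{(r)}_T\}$ to get $\E[\norm{2}{\Tilde{\beta}^{\text{ft}}_k - \beta^*}] \le 2\sqrt{n}\, \E[\max_{S} |\delta_S|]$ and let $k \to \infty$. The mechanism behind \Cref{prop:model_behavior_approx} is that fixed-step gradient descent with $\alpha \le 1/L$ on the convex objective converges, so pointwise $\Tilde{\theta}^{\text{ft}}_{S, k} \to \Tilde{\theta}^*_S$ and thus $\delta_S \to \mathcal{F}(\Tilde{\theta}^*_S) - \mathcal{F}(\theta^*_S)$ for every $S$. Because $2^{\mathcal{D}}$ is finite, the maximum of these finitely many convergent quantities converges, and a dominated-convergence argument lets me pass the limit inside the expectation, giving $\E[\max_S |\delta_S|] \to \E[\max_{S \in 2^{\mathcal{D}}} |\mathcal{F}(\Tilde{\theta}^*_S) - \mathcal{F}(\theta^*_S)|] \le C$ by hypothesis. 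Combining the two displays yields $\E[\norm{2}{\Tilde{\beta}^{\text{ft}}_k - \beta^*}] \le 2\sqrt{n}\, C$ as $k \to \infty$.

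I expect the main obstacle to be precisely this last interchange of the $k \to \infty$ limit with the maximum over subsets and with the expectation: the per-subset conclusion of \Cref{prop:model_behavior_approx} alone only bounds each $\E[|\delta_S|]$, whereas the $\sqrt{n}$ rate requires controlling $\E[\max_S |\delta_S|]$, for which the strengthened uniform hypothesis $\E[\max_S |\mathcal{F}(\Tilde{\theta}^*_S) - \mathcal{F}(\theta^*_S)|] \le C$ is indispensable. The finiteness of $2^{\mathcal{D}}$ is what makes the max benign, and everything else reduces to the triangle inequality, the binomial identity above, and monotonicity of the expectation.
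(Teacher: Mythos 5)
Your proof is correct and takes essentially the same route as the paper's: the paper packages the linearity as a matrix identity $\beta = Av$ (its Lemma~2) and applies H\"older's inequality with $\norm{1}{A[i]} = 2$, which is exactly your coordinatewise bound $|\Tilde{\beta}^{\text{ft}}_{k,i} - \beta^*_i| \le 2\max_{S}|\delta_S|$ obtained from the same binomial identity, followed by the same $\sqrt{n}$ conversion to an $\ell^2$ bound and the same $k \to \infty$ argument inherited from \Cref{prop:model_behavior_approx}. The only cosmetic difference is the order of operations at the end — the paper first splits $\E[\max_S |\mathcal{F}(\Tilde{\theta}^{\text{ft}}_{S,k}) - \mathcal{F}(\theta^*_S)|]$ by the triangle inequality and then sends the fine-tuning term to zero, whereas you pass the limit through the finite max and the expectation directly — and both arguments rest on the identical convergence mechanism.
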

The proof is in \Cref{apx:proofs}. The assumption that convexity, differentiability, and the Lipschitz continuity of gradient hold for all subsets is implied by the standard setting of a quadratic loss\footnote{A Lipschitz constant applicable across all subsets can be defined as $\max_{S \in 2^{\mathcal{D}}} \sigma_{\max}(A_S)$, where $A_S$ is defined by the subset $S$. For example, in linear regression, $A_S = X_S^\top X_S$, where $X_S$ denotes the feature matrix contributed by the data contributors in $S$.}, as in \citet{golatkar2020eternal} and \citet{georgiev2024attribute}.
The key takeaway is that the Shapley value error is bounded with more steps of sparsified fine-tuning. Finally, we note that both \Cref{prop:model_behavior_approx} and \Cref{cor:shapley_value_approx} are asymptotic results, and that \Cref{cor:shapley_value_approx} is based on exact Shapley values. We leave theoretical results incorporating finite-step bounds and Shapley value estimation for future work. Nevertheless, we verify the insights from our theoretical results under empirical settings for diffusion models in \Cref{apx:emp_verification}.




\section{Experiments}\label{sec:exp}
In this section, we compare our approach with existing attribution methods across three settings. We employ two metrics for evaluation: linear datamodeling score (LDS) and counterfactual analysis. Our results demonstrate that our method outperforms existing attribution methods for diffusion models.

\subsection{Datasets and Contributors}
Datasets used for our experiments include CIFAR-20, a subset of CIFAR-100 with 20 contributors, one per class, as described in \citet{krizhevsky2009learning};  CelebA-HQ, with 50 celebrity identities as contributors; and ArtBench (Post-Impressionism), with 258 artists as contributors. More details about datasets and contributors can be found in Appendix \ref{apx:datasets}. 

\subsection{Experiment Settings}
\textbf{Model training.} For CIFAR-20, we follow the original implementation of the unconditional DDPM \citep{ho2020denoising} where the model has 35.7M parameters. For CelebA-HQ, we follow the implementation of LDM \citep{rombach2022high} with 274M parameters and a pre-trained VQ-VAE \citep{razavi2019generating}. For ArtBench (Post-Impressionism), a Stable Diffusion model \citep{rombach2022high} is fine-tuned using LoRA \citep{hu2021lora} with rank $=$ 256, corresponding to 5.1M LoRA parameters. The prompt for the Stable Diffusion model is set to \textit{"a Post-Impressionist painting"} for each image. Please refer to \Cref{apx:training_setup} for more details about model training and inference.

\textbf{Sparsified fine-tuning.} Magnitude-based pruning \citep{han2015learning} is used to remove model weights according to their magnitudes, resulting in sparse diffusion models with reduced parameters: from 35.7M to 19.8M for CIFAR-20, from 274M to 70.9M for CelebA-HQ, and from 5.1M to 2.6M for ArtBench (Post-Impressionism). 
To estimate Shapley values, the pruned models are fine-tuned on data subsets corresponding to different contributor combinations, with 1,000 steps, 500 steps, and 200 steps for CIFAR-20, CelebA-HQ, and ArtBench (Post-Impressionism), respectively.

\textbf{Global model properties.} For CIFAR-20, we aim to study the contribution of each labeler. As each labeler is tasked to filter out noisy samples in each class \citep{krizhevsky2009learning}, high-quality labeling should ensure that generated images are well separated with respect to image classes. Therefore, for the global model property, we choose the Inception Score \citep{salimans2016improved}: 
\begin{equation}
    \text{IS} = \exp\left(\mathbb{E}_{\rvx} [ \text{KL}(p(y|\rvx) \| p(y))]\right),
\end{equation}
where $p(y)$ represents the marginal class distribution over the generated data and $p(y|\rvx)$ represents the conditional class distribution given a generated image $\rvx$. 

For CelebA-HQ, our goal is to investigate how individual celebrities contribute to the demographic diversity of the model. Following \citet{luccioni2023stable}, we measure diversity using entropy:
\begin{equation} \label{eq:diversity}
    \mathcal{H} = -\sum_{k=1}^{K} p_k \log(p_k),
\end{equation}
where $p_k$ is the proportion of generated samples in the $k$th demographic cluster. We generate images using the full model and extract their embeddings with BLIP-VQA \citep{li2022blip} to create 20 reference clusters. Subsequently, we assign these reference clusters to images generated from various retrained models to calculate the entropy.


For ArtBench (Post-Impressionism), we consider the use case where images are generated and the most aesthetically pleasing ones are kept. We simulate this use case by computing aesthetic scores\footnote{\url{https://github.com/LAION-AI/aesthetic-predictor}} for 50 generated images and considering the 90th percentile as the global model property.

More details about these global model properties are in \Cref{apx:datasets}.

\subsection{Baseline Methods}
We mainly categorize baseline attribution methods into three categories\footnote{TracIn \citep{pruthi2020estimating} is not considered since intermediate checkpoints may not be available in practice.}: (i) similarity-based methods (e.g., pixel similarity between the generated and training images); (ii) leave-one-out (LOO) and its approximate variants such as the influence functions \citep{koh2017understanding} and TRAK-based methods \citep{park2023trak}; and (iii) application-specific metrics such as the aesthetic score of each training image for ArtBench (Post-Impressionism). Because retraining diffusion models is computationally expensive, gradients are computed using the original model $\theta^*$ for TRAK-based methods, as done in \citet{zheng2023intriguing}. Local attribution scores are averaged across the images generated by the original model for computing global model properties. To aggregate datum-level attributions, the sum of attributions corresponding to each contributor is taken for the influence functions and TRAK-based methods, since this is the principled aggregation \citep{koh2019accuracy}. For similarity-based and application-specific methods, datum-level attribution scores are aggregated for contributor attribution by taking either the average or maximum. More details about baseline methods can be found in \Cref{apx:baselines}.

\subsection{Evaluating the Performance of Contributor Attribution}
\textbf{Linear datamodeling score (LDS).} 
From Definition \ref{def:data_attribution}, for a given subset of contributors $S \subseteq \mathcal{D}$, where $\mathcal{D} = \{ \mathcal{C}_1, ..., \mathcal{C}_n \}$, we can define an additive datamodel of global properties based on a given set of attribution scores $\tau$ for the contributors:
\begin{equation}\label{eq:group_attribution}
    g(S, \tau) = \sum_{i: \mathcal{C}_i \in S} \tau_i.
\end{equation}
Therefore, the evaluation for an attribution method $\tau$ can be constructed as follows:
\begin{definition}\label{def:LDS}
    (Linear datamodeling score) Contributor attribution performance is measured using the linear datamodeling score (LDS) \citep{ilyas2022datamodels}, which evaluates an attribution method by comparing predicted model properties based on the additive datamodel against actual retrained model properties. Let $S_1, ..., S_B$ be randomly sampled subsets of $\mathcal{D}$, each of size $\alpha \cdot n $ for some $\alpha \in  (0, 1)$. The LDS for a contributor attribution score $\tau \in \sR^{n}$ is defined as 
\begin{equation}
    LDS = \rho\left(\{\mathcal{F}(\theta^*_{S_b})\}_{b=1}^B, \{ g(S_b, \tau) \}_{b=1}^B \right), \text{ where } S_b \sim \text{Uniform}\{S \subset \mathcal{D} : |S| = \alpha \cdot n\},
\end{equation}
\textit{where $\rho$ is the Spearman rank correlation ~\citep{spearman1961proof}, and $\theta^*_{S_b}$ denotes a model retrained from scratch with the contributor subset $S_b$.}
\end{definition}

We evaluate the LDS using 100 held-out subsets $S_b$, each sampled from the datamodel distribution
with $\alpha = 0.25, 0.5, 0.75$ for each dataset. We report the LDS means and 95\% confidence intervals across three independent sets of $\{S_b\}_{b=1}^{100}$.


\textbf{Counterfactual evaluation.} Following \citet{zheng2023intriguing}, we also apply counterfactual evaluation \citep{hooker2019benchmark}. We assess the relative change in model property, $ \Delta\mathcal{F} = \frac{\mathcal{F}(\theta^*_K) - \mathcal{F}(\theta^*)}{\mathcal{F}(\theta^*)}$, by comparing the models trained before and after excluding (or retaining only) the top $K$ most influential contributors identified by each attribution method. Counterfactual evaluation requires retraining models on different subsets for each method, so only baseline methods with the best LDS in each category are chosen for computational feasibility.

\subsection{Experiment Results}\label{sec:exp_results}

\subsubsection*{Shapley Attribution Outperforms Baseline Methods in Contributor Attribution}\label{sec:lds_results}
In \Cref{tab:lds_results_main}, we present the LDS results for baseline methods and our approach. Interestingly, similarity-based methods such as raw pixel similarity, CLIP similarity, and gradient similarity occasionally outperform TRAK-based methods. We observe that TRAK-based methods sometimes yield poor or even negative correlations. Among TRAK-based approaches, attribution using noisy latents during generation (i.e., Journey-TRAK) can result in a negative LDS for model properties such as the Inception Score and aesthetic score. Our findings indicate that simply aggregating individual attributions derived from diffusion loss or its alternative functions is insufficient for accurately determining contributor attribution of global properties. Such an approach can perform worse than simple heuristics based on model properties (e.g., the average aesthetic score).

\begin{table}[h!]
\small
\centering
\caption{LDS (\%) results with $\alpha = 0.5$. Means and 95\% confidence intervals across three random initializations are reported.\\}
\begin{tabular}{lccc}
\toprule
\textbf{Method} & \textbf{CIFAR-20} & \textbf{CelebA-HQ} & \textbf{\shortstack{ArtBench\\(Post-Impressionism)}} \\ 
    \midrule
    Pixel similarity (average) & -11.81 $\pm$ 4.56 & -8.91 $\pm$ 0.93 & 11.24 $\pm$ 0.63 \\ 
    Pixel similarity (max) & -31.80 $\pm$ 2.90 & 21.70 $\pm$ 2.05 & 14.61 $\pm$ 2.72 \\ 
    Embedding dist. (average) & {-} & 13.83 $\pm$ 1.12 & {-} \\ 
    Embedding dist. (max) & {-} & 7.32 $\pm$ 3.16 &  {-} \\ 
    CLIP similarity (average) & 5.79 $\pm$ 3.67 & -32.23 $\pm$ 0.87 & -6.96 $\pm$ 4.08 \\ 
    CLIP similarity (max) & 11.31 $\pm$ 0.37 & -0.93 $\pm$ 3.83 & -1.75 $\pm$ 4.07 \\ 
    Gradient similarity (average) & 5.79 $\pm$ 3.67 & -18.32  $\pm$ 0.65 & 0.25 $\pm$ 1.18 \\ 
    Gradient similarity (max) & -0.89 $\pm$ 3.17 & -12.90  $\pm$ 1.60 &10.48 $\pm$ 3.11 \\ 
    \midrule
    Aesthetic score (average) & -&- & 24.85 $\pm$ 2.30 \\
    Aesthetic score (max) &- & -& 21.36 $\pm$ 3.70 \\
    \midrule
    Relative IF & 5.23 $\pm$ 5.50 & -1.07  \(\pm \) 0.68 & -5.02 $\pm$ 1.77 \\ 
    Renormalized IF & 11.39 $\pm$ 6.79 & \(10.17  \pm 0.57\) & -11.41 $\pm$ 0.93 \\ 
    TRAK & 7.94 $\pm$ 5.67 & \(3.22  \pm 0.75\) & -8.18 $\pm$ 1.30 \\
    Journey-TRAK & -42.92 $\pm$ 2.15   & -2.88 $\pm$ 4.02 & -11.41 $\pm$ 4.22 \\
    D-TRAK & 10.90$\pm$ 1.21 & \(-27.23  \pm 2.80\) & 11.30 $\pm$ 3.47 \\
    Leave-one-out (LOO) & 30.66 $\pm$ 6.11 & -1.22 $\pm$ 6.34 & 3.74 $\pm$ 8.00 \\
    \midrule 
    Sparsified-FT Shapley \textbf{(Ours)} &\textbf{ 61.48 $\pm$ 2.27} & \textbf{26.34 $\pm$ 3.42} &\textbf{ 61.44 $\pm$ 2.04} \\ 
\bottomrule
\end{tabular}
\label{tab:lds_results_main}
\end{table} 

In contrast, our approach, sparsified fine-tuning (sparsified-FT) Shapley, computes contributor attribution using the Shapley value, achieving the highest LDS results of  61.48\%, 26.34\% , and 61.44\% for CIFAR-20, CelebA-HQ, and ArtBench (Post-Impressionism), respectively. While leave-one-out (LOO) achieves 30.66\% LDS on CIFAR-20, its performance declines as the number of contributors increase, e.g., CelebA-HQ and ArtBench (Post-Impressionism). This shows that attribution based on the marginal contribution of Shapley subsets with respect to $\mathcal{F}$ provides the most accurate importance score. Despite achieving the best results for CelebA-HQ compared to the baseline methods, we observe that the LDS performance of sparsified-FT Shapley, 26.32\%, is relatively low compared to those of CIFAR-20 and ArtBench. We also perform evaluation with additional datamodel subset sizes ($\alpha = 0.25, 0.75$), and our approach consistently outperforms others (\Cref{apx:exp}). 




\subsubsection*{Enhancing Retraining and Inference Efficiency through Sparsified Fine-Tuning}
As described in \Cref{sec:speed_up_with_sft}, computing $\mathcal{F}(\theta^*_{S_j})$ is the primary computational bottleneck. Our sparsified fine-tuning approach significantly reduces the runtime required to obtain $\theta^*_{S_j}$ compared to retraining and fine-tuning without sparsification. On average, retraining and inference with sparsified fine-tuning for a Shapley subset take 18.3 minutes for CIFAR-20, 22.9 minutes for CelebA-HQ, and 10.5 minutes for ArtBench (Post-Impressionism), making it 5.3, 10.4, and 18.6 times faster than retraining, respectively (\Cref{tab:results_computation_time}). By enabling faster computation and obtaining more models retrained on different subsets, sparsified FT yields the best LDS results under the same computational budgets (\Cref{fig:lds_results}).  This demonstrates that sparsified-FT Shapley is both more computationally feasible and accurate with limited computational resources. To the best of our knowledge, we are the first to overcome the computational bottleneck and enable contributor attribution using Shapley values for diffusion models.

\begin{figure}[h!]
    \centering
    \includegraphics[width=0.85\textwidth]{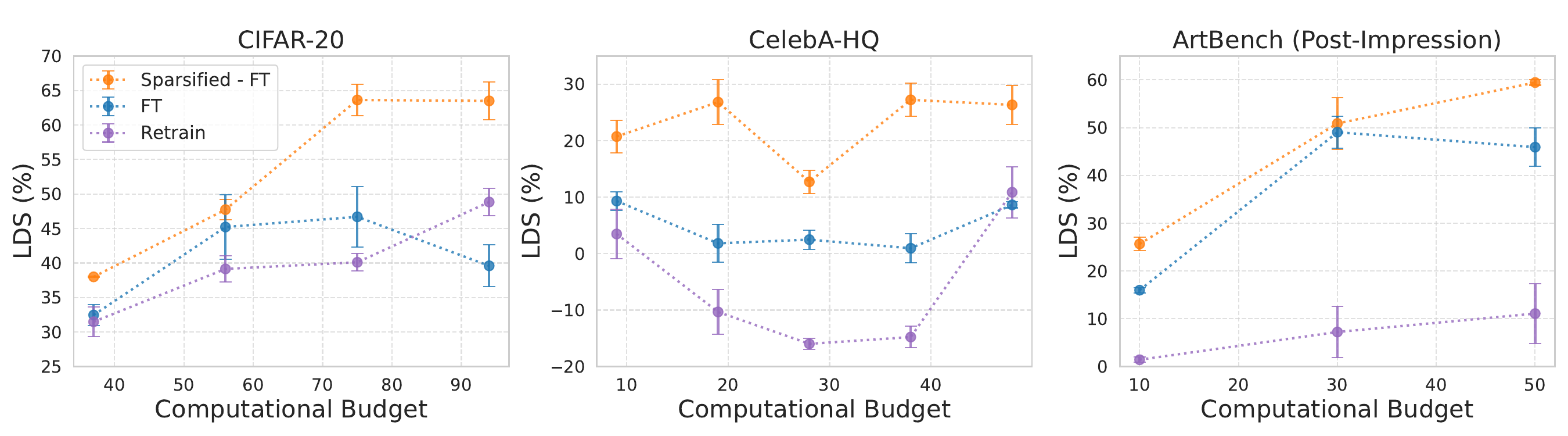}
    \caption{Comparison of LDS (\%) with $\alpha = 0.5$ among Shapley values estimated with sparsified fine-tuning (FT), fine-tuning (FT), and retraining under the same computational budgets (1 unit $=$ runtime to retrain and run inference on a full model). Specific runtimes are shown in \Cref{tab:results_computation_time}.}
    \label{fig:lds_results}
\end{figure}


\subsubsection*{Impact of Contributors on Model Property}
Here we present the results of our counterfactual evaluation, analyzing changes in model behavior after either removing the top contributors or including only the top contributors identified by each method. In CIFAR-20, our approach shows a change of -23.23\%, compared to -14.95\% for CLIP similarity and -17.30\% for LOO, as shown in  \Cref{fig:counterfactual_results} (top). For CelebA-HQ, the changes are -7.83\% for our method, -6.64\% for pixel similarity, and 0.21\% for renormalized IF. For ArtBench (Post-Impressionism), the changes are 0.58\%, -0.05\%, -1.27\%, and -1.86\% for D-TRAK, maximum pixel similarity, average aesthetic score, and sparsified-FT Shapley, respectively.

Conversely, when retaining the top 60\% of contributors, model properties improve. In the CIFAR-20 dataset, our method leads to a 16.98\% positive change in model properties, compared to -9.45\% for CLIP similarity and 9.51\% for LOO. For CelebA-HQ, the model behavior change was 20.0\% for our method, 8.89\% for pixel similarity, and 6.02\% for renormalized IF (bottom of \Cref{fig:counterfactual_results}). These findings, along with the results in \Cref{sec:lds_results}, demonstrate that our approach effectively identifies the top contributors and provides accurate attribution. 

\subsection*{Who are the top contributors?}
We analyze the top contributors identified by sparsified-FT Shapley in CIFAR-20, including contributors to classes such as motorcycles, buses, and lawnmowers (\Cref{fig:cf_generated_images}). High-quality labeling by these contributors should ensure that the training images contain clear and meaningful objects, which should result in low entropy (i.e., high confidence) of a classifier (i.e., Inception v3) \citep{barratt2018note}. We therefore compute the entropy of each image and find that the top 20\% classes have a lower average entropy of 5.18, compared to 6.03 for the remaining classes (\Cref{fig:qualitative_results}). For CelebA-HQ, the most important celebrities ranked by sparsified-FT Shapley encompass a diverse range of demographics and tend to belong to non-majority demographic clusters (\Cref{fig:qualitative_results}). Excluding images corresponding to these celebrities can have a negative impact on the diversity score. For ArtBench, we observe that images from the top contributors are more vivid and exhibit more vibrant colors, as shown in \Cref{fig:cf_generated_images}.

\begin{figure}[h!]
    \centering
    \includegraphics[width=0.9\textwidth]{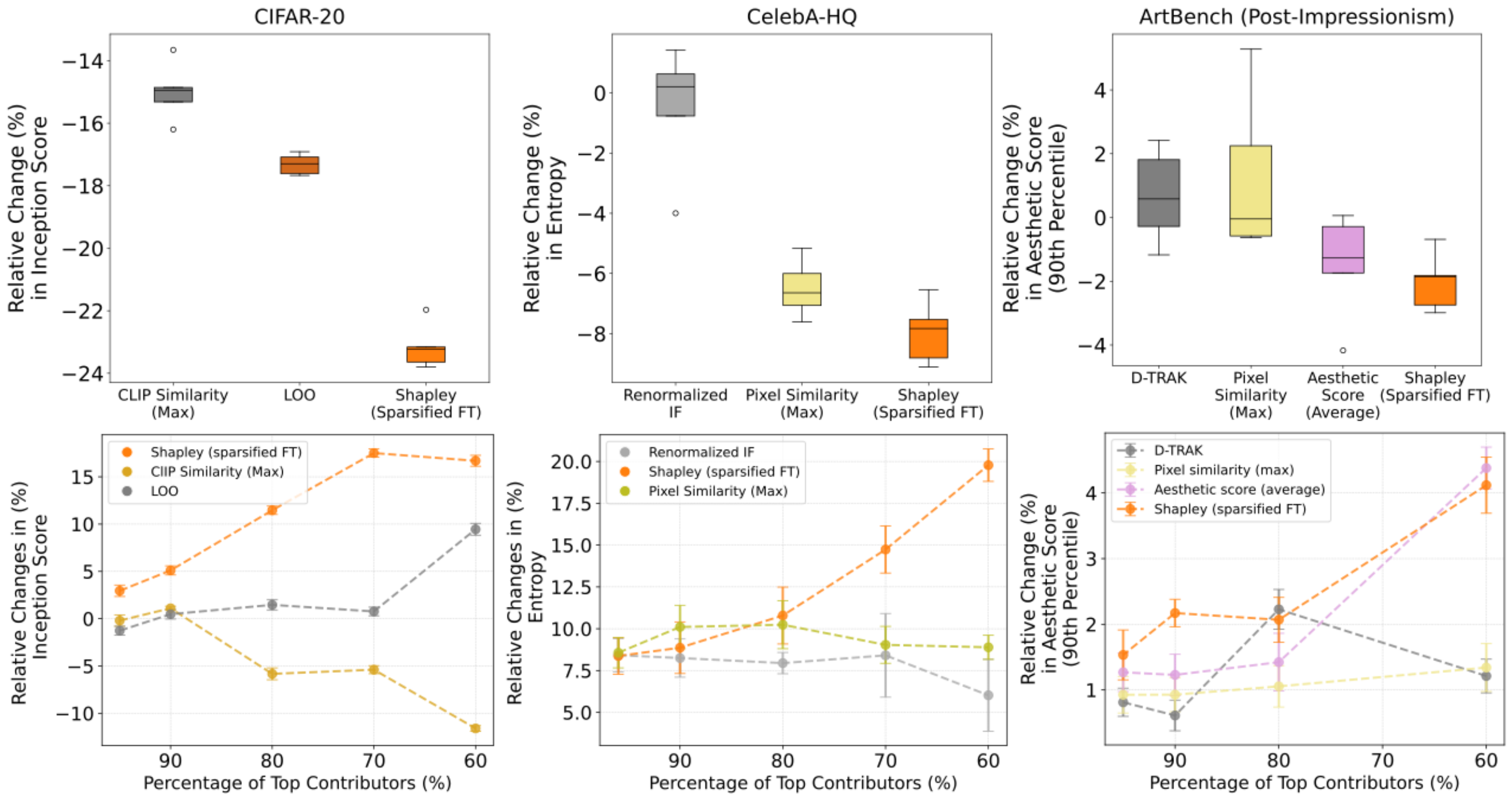}
    \caption{Relative percentage changes in global properties, comparing the original fully trained models to models retrained after removing the top 40\% contributors (top) and including only the top contributors (bottom), as identified by various attribution methods.}
    \label{fig:counterfactual_results}
\end{figure}

\begin{figure}[h!]
    \centering
    \includegraphics[width=0.7\textwidth]{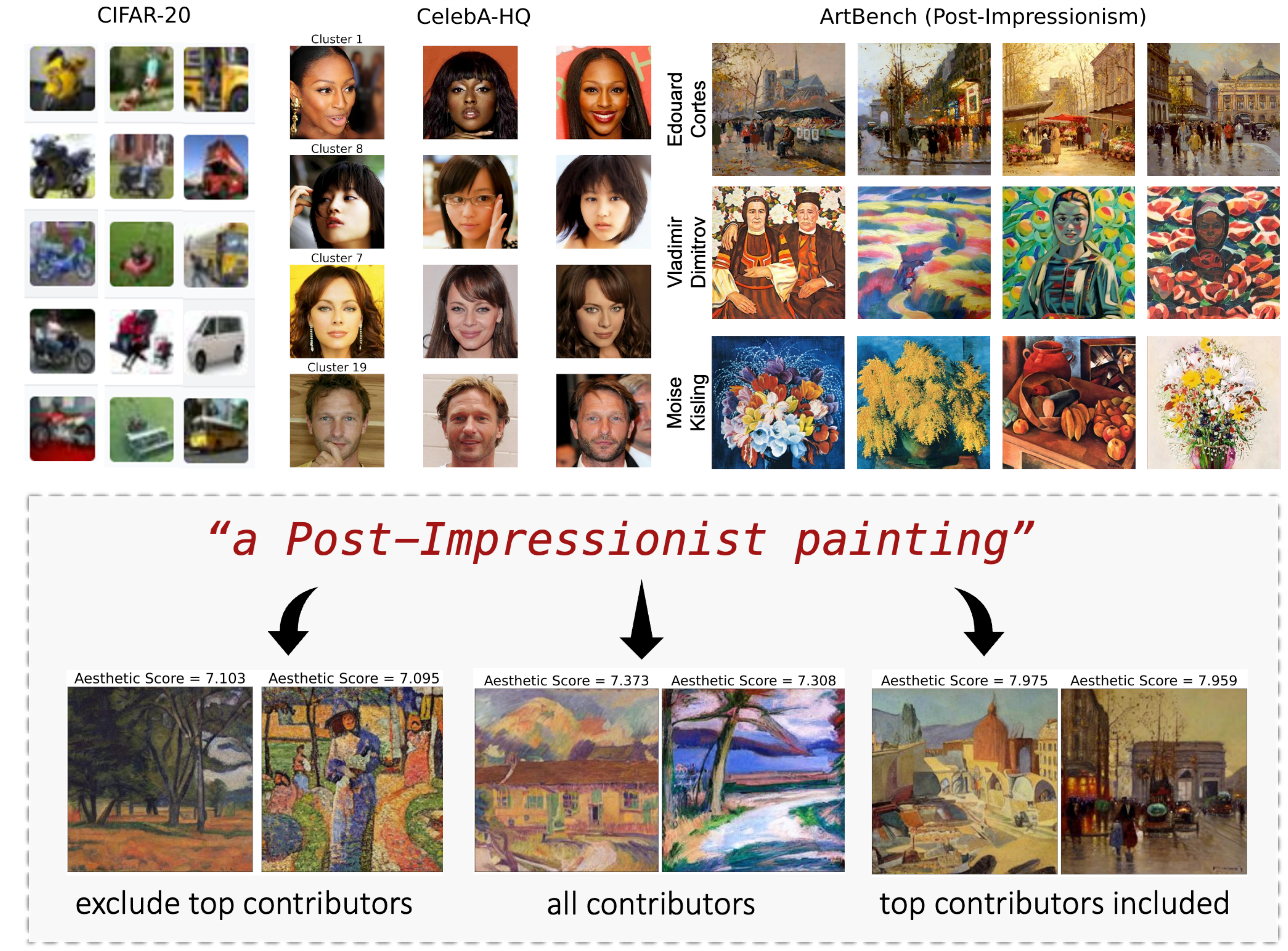}
    \caption{Top contributors and their corresponding training images for each dataset (top). Pairs of generated images above and below the 90th percentile of aesthetic score from Stable Diffusion models LoRA-finetuned under three conditions: excluding the data from the top 40\% of artists, using the data from all the artists, and including only the data from the top 60\% of artists (bottom).}
    \label{fig:cf_generated_images}
\end{figure}


\section{Discussion}
In this work, we introduce the problem of attributing global model properties to data contributors of diffusion models. We develop an efficient framework to estimate the Shapley values for data contributors by leveraging model pruning and fine-tuning, speeding up retraining and inference runtime while ensuring accurate attribution. Our framework can have a range of implications, such as creating incentives for data contributors, rewarding data labelers in a way that satisfies game-theoretic fairness, assessing label quality, and improving model performance and fairness. Empirical results for multiple datasets and global model properties show that our framework outperforms existing attribution approaches based on the diffusion loss, such as TRAK \citep{park2023trak} and its variant D-TRAK \citep{zheng2023intriguing}, in crediting data contributors. There are several promising directions for future research. Beyond fine-tuning, recent unlearning methods designed for diffusion models \citep{gandikota2023erasing, heng2023selective} could be explored in combination with various pruning strategies \citep{ding2019centripetal, fang2024structural, liu2021group}. Beyond diffusion models, our proposed framework can also be extended to models that are expensive to retrain, such as large language models (LLMs), whenever appropriate pruning and unlearning approaches are available. Finally, our approach and existing attribution methods for diffusion models assume access to training data and model parameters, which are not necessarily available for deployed models. For text-to-image models, membership inference and in-context learning may be useful in addressing this challenge \citep{hu2023membership, wang2023context}.



\section{Ethics Statement}
As diffusion models become increasingly deployed in real-world applications, there is a notable lack of clear guidelines for crediting data contributors. This work focuses on the fair attribution of data contributors for diffusion models, which is a crucial step towards creating incentives for sharing quality data and implementing policies for equitable compensation. By efficiently estimating Shapley values, our approach can promote fairness in recognizing the importance of data contributors. Importantly, our approach is motivated by positive attribution, such as incentivizing quality labeling and improving the demographic diversity of generated images. However, we also recognize that our approach could potentially be used to remove data from certain data contributors. We view this as an opportunity to ensure higher data quality and transparency in model development, particularly when addressing harmful or irrelevant contributions. Ensuring data contributors are fairly compensated and informed is critical to avoid exploitation. Our use cases, such as demographic diversity for CelebA-HQ, highlight the need for transparency in data usage and advocate for diverse and representative datasets. At the same time, we acknowledge that considerations related to data privacy, consent, and the potential amplification of biases, especially in socially sensitive applications, may not be easily quantified as model properties.

\section{Acknowledgements}
We thank members of the Lee lab for providing feedback on this project. This work was funded by the National Science Foundation [DBI-1759487]; and the National Institutes of Health [R01 AG061132, R01 EB035934, and RF1 AG088824].


\bibliography{iclr2025_conference}
\bibliographystyle{iclr2025_conference}

\clearpage
\appendix
\section*{Appendix}


\section{Proofs}\label{apx:proofs}
Before proving \Cref{prop:model_behavior_approx}, we first provide the following lemma.

\begin{lemma}\label{lemma:grad_descent_convergence}
    Suppose an objective function $\ell: \mathbb{R}^d \mapsto \mathbb{R}$ is convex and differentiable, and that its gradient is Lipschitz-continuous with some constant $L > 0$, i.e.
    \begin{equation*}
        \norm{2}{\nabla \ell(\theta_1) - \nabla \ell(\theta_2)} \le L \norm{2}{\theta_1 - \theta_2}
    \end{equation*}
    for any $\theta_1, \theta_2 \in \mathbb{R}^d$. Then running gradient descent for $k$ steps with learning rate $\alpha \le 1 / L$ satisfies
    \begin{equation*}
        \ell(\theta^{(k)}) - \ell(\theta^*) \le \frac{\norm{2}{\theta^{(0)} - \theta^*}^2}{2k\alpha},
    \end{equation*}
    where $\theta^{(k)}$ is the model parameter after the $k$th step, $\theta^*$ is the optimum, and $\theta^{(0)}$ is the initial parameter.
\end{lemma}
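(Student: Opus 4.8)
The plan is to give the textbook convergence analysis of gradient descent for a smooth convex objective, which rests on three ingredients assembled in order: the descent (quadratic upper bound) inequality implied by $L$-smoothness, the first-order convexity inequality at the optimum, and a telescoping sum. Throughout I write $\theta^{(t+1)} = \theta^{(t)} - \alpha \nabla \ell(\theta^{(t)})$ for the gradient descent update.

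First I would establish a per-step decrease. Lipschitz continuity of $\nabla \ell$ gives the standard quadratic upper bound $\ell(\theta_2) \le \ell(\theta_1) + \nabla \ell(\theta_1)^\top (\theta_2 - \theta_1) + \tfrac{L}{2}\norm{2}{\theta_2 - \theta_1}^2$ for all $\theta_1, \theta_2$. Substituting $\theta_2 = \theta^{(t+1)}$ and $\theta_1 = \theta^{(t)}$ and using the update rule turns the right-hand side into $\ell(\theta^{(t)}) - \alpha\norm{2}{\nabla \ell(\theta^{(t)})}^2 + \tfrac{L\alpha^2}{2}\norm{2}{\nabla \ell(\theta^{(t)})}^2$. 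The hypothesis $\alpha \le 1/L$ is used precisely here: it forces $\tfrac{L\alpha^2}{2} \le \tfrac{\alpha}{2}$, yielding the descent inequality $\ell(\theta^{(t+1)}) \le \ell(\theta^{(t)}) - \tfrac{\alpha}{2}\norm{2}{\nabla \ell(\theta^{(t)})}^2$. In particular the objective values $\ell(\theta^{(t)})$ are non-increasing, a fact I will need at the very end.

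Next I would bring the optimum $\theta^*$ into play through convexity, $\ell(\theta^{(t)}) \le \ell(\theta^*) + \nabla \ell(\theta^{(t)})^\top(\theta^{(t)} - \theta^*)$, and combine it with the descent inequality. The resulting bound on $\ell(\theta^{(t+1)}) - \ell(\theta^*)$ contains the expression $\nabla \ell(\theta^{(t)})^\top(\theta^{(t)} - \theta^*) - \tfrac{\alpha}{2}\norm{2}{\nabla \ell(\theta^{(t)})}^2$, which I rewrite — completing the square and recognizing $\theta^{(t)} - \alpha \nabla \ell(\theta^{(t)}) = \theta^{(t+1)}$ — as $\tfrac{1}{2\alpha}\bigl(\norm{2}{\theta^{(t)} - \theta^*}^2 - \norm{2}{\theta^{(t+1)} - \theta^*}^2\bigr)$. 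This gives the key one-step estimate $\ell(\theta^{(t+1)}) - \ell(\theta^*) \le \tfrac{1}{2\alpha}\bigl(\norm{2}{\theta^{(t)} - \theta^*}^2 - \norm{2}{\theta^{(t+1)} - \theta^*}^2\bigr)$. Finally I would sum this over $t = 0, \ldots, k-1$: the right side telescopes to $\tfrac{1}{2\alpha}\bigl(\norm{2}{\theta^{(0)} - \theta^*}^2 - \norm{2}{\theta^{(k)} - \theta^*}^2\bigr) \le \tfrac{1}{2\alpha}\norm{2}{\theta^{(0)} - \theta^*}^2$, while monotonicity of the objective values lets me lower-bound the left side by $k\bigl(\ell(\theta^{(k)}) - \ell(\theta^*)\bigr)$. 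Dividing by $k$ produces exactly the claimed bound.

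The argument is essentially mechanical; the only step requiring a moment of care is the completion-of-the-square identity that recasts the mixed inner-product-minus-squared-gradient term as a difference of squared distances to $\theta^*$, since this is what makes the sum telescope. Everything else is direct substitution of the update rule and bookkeeping, so I do not anticipate a genuine obstacle — the main thing to track is that the two uses of $\alpha \le 1/L$ (for the descent inequality) and of monotonicity (at the summation step) are both invoked correctly.
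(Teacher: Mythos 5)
Your proposal is correct and follows exactly the same route as the paper's proof: the $L$-smoothness quadratic upper bound combined with $\alpha \le 1/L$ to get the descent inequality, convexity at $\theta^*$, the completion-of-the-square identity yielding the telescoping one-step bound, and finally monotonicity of the objective values to convert the averaged bound into a bound on $\ell(\theta^{(k)}) - \ell(\theta^*)$. No gaps; the two uses of $\alpha \le 1/L$ and of monotonicity are invoked in the same places as in the paper.
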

\begin{proof}
    We provide the proof for this well-known result for completeness. The assumption that $\nabla \ell(\theta)$ is $L$-Lipschitz continuous implies that $\nabla^2 \ell(\theta) \preceq L I$, so we can perform a quadratic expansion of $\ell$ around $\ell(\theta)$ to obtain the following inequality:
    \begin{equation}\label{eq:quad_expansion}
        \ell(\theta^{(i)}) \le \ell(\theta) + \nabla \ell(\theta)^\top (\theta^{(i)} - \theta) + \frac{1}{2} L \norm{2}{\theta^{(i)} - \theta}^2,
    \end{equation}
    for any $\theta, \theta^{(i)} \in \mathbb{R}^d$. With the gradient descent update $\theta^{(i)} = \theta^{(i - 1)} - \alpha \nabla \ell(\theta^{(i - 1)})$ and setting $\theta = \theta^{(i - 1)}$in \Cref{eq:quad_expansion}, we obtain
    \begin{align}
        \ell(\theta^{(i)})
            &\le \ell(\theta^{(i - 1)}) + \nabla \ell(\theta^{(i - 1)})^\top \alpha \nabla \ell(\theta^{(i - 1)}) + \frac{1}{2} L \norm{2}{\alpha \nabla \ell(\theta^{(i - 1)})}^2 \\
            &= \ell(\theta^{(i - 1)}) - \alpha \norm{2}{\nabla \ell(\theta^{(i - 1)})}^2 + \frac{1}{2} \alpha^2 L \norm{2}{\nabla \ell(\theta^{(i - 1)})}^2 \\
            &= \ell(\theta^{(i - 1)}) - \left(1 - \frac{1}{2} \alpha L \right) \alpha \norm{2}{\nabla \ell(\theta^{(i - 1)})}^2.
    \end{align}
    Using $\alpha \le 1 / L$, we have
    \begin{equation}
        1 - \frac{1}{2} \alpha L \ge 1 - \frac{1}{2} \cdot \frac{1}{L} \cdot L = \frac{1}{2}.
    \end{equation}
    Hence
    \begin{equation}\label{eq:bounded_update}
        \ell(\theta^{(i)}) \le \ell(\theta^{(i - 1)}) - \frac{1}{2} \alpha \norm{2}{\nabla \ell(\theta^{(i - 1)})}^2,
    \end{equation}
    which implies that the objective is decreasing until the optimal value is reached, because $\norm{2}{\nabla \ell(\theta^{(i - 1)})}^2$ is positive unless $\nabla \ell(\theta^{(i - 1)}) = 0$.

    Now since $\ell$ is convex, we have
    \begin{equation}
        \ell(\theta^{(i - 1)}) + \nabla \ell(\theta^{(i - 1)})^\top (\theta^* - \theta^{(i - 1)}) \le \ell(\theta^*),
    \end{equation}
    and
    \begin{equation}
        \ell(\theta^{(i - 1)}) \le \ell(\theta^*) + \nabla \ell(\theta^{(i - 1)})^\top (\theta^{(i - 1)} - \theta^*)
    \end{equation}
    after rearrangement. Plugging this inequality into \Cref{eq:bounded_update}, we have
    \begin{equation}
        \ell(\theta^{(i)}) \le \ell(\theta^*) + \nabla \ell(\theta^{(i - 1)})^\top (\theta^{(i - 1)} - \theta^*) - \frac{1}{2} \alpha \norm{2}{\nabla \ell(\theta^{(i - 1)})}^2,
    \end{equation}
    which implies that
    \begin{align}
        &\ell(\theta^{(i)}) - \ell(\theta^*) \\
            &\le \nabla \ell(\theta^{(i - 1)})^\top (\theta^{(i - 1)} - \theta^*) - \frac{1}{2} \alpha \norm{2}{\nabla \ell(\theta^{(i - 1)})}^2 \\
            &= \frac{1}{2\alpha} \left(2\alpha \nabla \ell(\theta^{(i - 1)})^\top (\theta^{(i - 1)} - \theta^*) - \alpha^2 \norm{2}{\nabla \ell(\theta^{(i - 1)})}^2 \right) \\
            &= \frac{1}{2\alpha} \left(2\alpha \nabla \ell(\theta^{(i - 1)})^\top (\theta^{(i - 1)} - \theta^*) - \alpha^2 \norm{2}{\nabla \ell(\theta^{(i - 1)})}^2 - \norm{2}{\theta^{(i - 1)} - \theta^*}^2 + \norm{2}{\theta^{(i - 1)} - \theta^*}^2 \right) \\
            &= \frac{1}{2\alpha} \left( \norm{2}{\theta^{(i - 1)} - \theta^*}^2 - \norm{2}{\theta^{(i - 1)} - \alpha \nabla \ell(\theta^{(i-1)}) - \theta^*}^2\right).
    \end{align}
    Plugging in the gradient descent update rule into this inequality, we get
    \begin{equation}
        \ell(\theta^{(i)}) - \ell(\theta^*) \le \frac{1}{2\alpha} \left( \norm{2}{\theta^{(i - 1)} - \theta^*}^2 - \norm{2}{\theta^{(i)} - \theta^*}^2\right).
    \end{equation}
    Summing over $k$ steps, we have
    \begin{align}
        \sum_{i=1}^k \ell(\theta^{(i)}) - \ell(\theta^*)
            &\le \sum_{i=1}^k \frac{1}{2\alpha} \left( \norm{2}{\theta^{(i - 1)} - \theta^*}^2 - \norm{2}{\theta^{(i)} - \theta^*}^2\right) \\
            &= \frac{1}{2\alpha} \left( \norm{2}{\theta^{(0)} - \theta^*}^2 - \norm{2}{\theta^{(k)} - \theta^*}^2\right) \\
            &\le \frac{1}{2\alpha} \norm{2}{\theta^{(0)} - \theta^*}^2
    \end{align},
    where the telescoping sum results in the equality. Finally, the term in the summation on the LHS is decreasing because of \Cref{eq:bounded_update}, hence
    \begin{equation}
        \ell(\theta^{(k)}) - \ell(\theta^*) \le \frac{1}{k} \sum_{i=1}^k \ell(\theta^{(i)}) - \ell(\theta^*) \le \frac{\norm{2}{\theta^{(0)} - \theta^*}^2}{2k\alpha}.
    \end{equation}
\end{proof}

We then proceed to prove \Cref{prop:model_behavior_approx}.
\begin{proof}[Proof of \Cref{prop:model_behavior_approx}]
    By the triangle inequality, we have
    \begin{align}
        \E[|\mathcal{F}(\Tilde{\theta}^{\text{ft}}_{S, k}) - \mathcal{F}(\theta^*_S)|]
            &\le \E[|\mathcal{F}(\Tilde{\theta}^{\text{ft}}_{S, k}) - \mathcal{F}(\Tilde{\theta}^*_S)|] + \E[|\mathcal{F}(\Tilde{\theta}^*_{S}) - \mathcal{F}(\theta^*_S)|] \\
            &\le \E[|\mathcal{F}(\Tilde{\theta}^{\text{ft}}_{S, k}) - \mathcal{F}(\Tilde{\theta}^*_S)|] + B,
    \end{align}
    where the second inequality comes from the assumption that $\E[|\mathcal{F}(\Tilde{\theta}^*_{S}) - \mathcal{F}(\theta^*_S)|] \le B$.
    
    Note that $\Tilde{\theta}^{\text{ft}}_{S, k}$ and $\Tilde{\theta}^*_{S}$ are in $\mathbb{R}^d$, with zeros for the pruned weights. Hence the assumptions on $\ell$ hold for $\Tilde{\theta}^{\text{ft}}_{S, k}, \Tilde{\theta}^*_{S}$. Applying \Cref{lemma:grad_descent_convergence}, we have
    \begin{equation}
        \ell(\Tilde{\theta}^{\text{ft}}_{S, k}) - \ell(\Tilde{\theta}^*_{S}) \le \frac{\norm{2}{\Tilde{\theta}^* - \Tilde{\theta}^*_{S}}^2}{2k\alpha},
    \end{equation}
    where we recall that $\Tilde{\theta}^*$ denotes the pruned model before fine-tuning. Taking $k \to \infty$, we obtain $\ell(\Tilde{\theta}^{\text{ft}}_{S, k}) - \ell(\Tilde{\theta}^*_{S}) = 0$, and it follows that $\Tilde{\theta}^{\text{ft}}_{S, k} = \Tilde{\theta}^*_{S}$ because $\ell$ is convex. Therefore, as $k \to \infty$, we have
    \begin{align}
         \E[|\mathcal{F}(\Tilde{\theta}^{\text{ft}}_{S, k}) - \mathcal{F}(\theta^*_S)|] 
            &\le \E[|\mathcal{F}(\Tilde{\theta}^{\text{ft}}_{S, k}) - \mathcal{F}(\Tilde{\theta}^*_S)|] + B \\
            &= \E[|\mathcal{F}(\Tilde{\theta}^*_S) - \mathcal{F}(\Tilde{\theta}^*_S)|] + B \\
            &= B
    \end{align}
\end{proof}

Before proving \Cref{cor:shapley_value_approx}, we provide the following lemma.

\begin{lemma}\label{lemma:shapley_value_operator}
    The exact Shapley values $\beta \in \mathbb{R}^n$ for the data contributors evaluated with $\{\mathcal{F}(\theta_S)\}_{S \in 2^\mathcal{D}}$ can be represented as $\beta = A v$, where $A \in \mathbb{R}^{n \times 2^n}$ is a matrix with rows indexed by the data contributors and columns indexed by all possible contributor subsets, and $v \in \mathbb{R}^{2^n}$ is a vector indexed by all possible contributor subsets. Specifically,
    \begin{equation}
        A[i, S] =
        \begin{cases}
		\frac{1}{n} \binom{n - 1}{|S| - 1}^{-1}, & \text{if $C_i \in S$}\\
            -\frac{1}{n} \binom{n - 1}{|S|}^{-1}, & \text{otherwise}
	\end{cases}
    \end{equation}
    and $v[S] = \mathcal{F}(\theta_S)$.
\end{lemma}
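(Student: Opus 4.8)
The plan is to verify the identity $\beta = Av$ entrywise, starting from the definition of the Shapley value in \Cref{eq:shapley_value} and regrouping its terms according to which subset each evaluation of $\mathcal{F}$ is attached to. Fix a contributor $i$. The Shapley value $\beta_i$ is a sum over subsets $S \subseteq \mathcal{D}\setminus\mathcal{C}_i$ of the weighted marginal contribution $\frac{1}{n}\binom{n-1}{|S|}^{-1}\bigl(\mathcal{F}(\theta_{S\cup\mathcal{C}_i}) - \mathcal{F}(\theta_S)\bigr)$. I would split this into two sums: one collecting the $\mathcal{F}(\theta_{S\cup\mathcal{C}_i})$ terms (the ``positive'' contributions) and one collecting the $\mathcal{F}(\theta_S)$ terms (the ``negative'' contributions). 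The goal is to show that, after reindexing, the total coefficient multiplying $\mathcal{F}(\theta_S)$ in $\beta_i$ equals exactly $A[i,S]$ for every $S \in 2^{\mathcal{D}}$.

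For the negative part, each $S \subseteq \mathcal{D}\setminus\mathcal{C}_i$ contributes $-\frac{1}{n}\binom{n-1}{|S|}^{-1}\mathcal{F}(\theta_S)$, and these subsets are precisely the ones with $\mathcal{C}_i \notin S$; this matches the second case in the definition of $A[i,S]$. For the positive part, I would substitute $T = S\cup\mathcal{C}_i$. As $S$ ranges over all subsets of $\mathcal{D}\setminus\mathcal{C}_i$, the set $T$ ranges bijectively over all subsets of $\mathcal{D}$ that contain $\mathcal{C}_i$, with $|T| = |S|+1$. Rewriting the weight $\frac{1}{n}\binom{n-1}{|S|}^{-1}$ in terms of $|T|$ gives $\frac{1}{n}\binom{n-1}{|T|-1}^{-1}$, so each such $T$ receives coefficient $\frac{1}{n}\binom{n-1}{|T|-1}^{-1}$, matching the first case in the definition of $A[i,S]$.

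Combining the two parts, every subset $S\in 2^{\mathcal{D}}$ receives a coefficient equal to $A[i,S]$: subsets containing $\mathcal{C}_i$ arise only from the positive sum, and subsets avoiding $\mathcal{C}_i$ only from the negative sum, so there is no overlap. Hence $\beta_i = \sum_{S\in 2^{\mathcal{D}}} A[i,S]\,\mathcal{F}(\theta_S) = (Av)_i$, and since $i$ was arbitrary this establishes $\beta = Av$.

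The only point requiring care is the reindexing step: I would verify that $S\mapsto S\cup\mathcal{C}_i$ is genuinely a bijection from the subsets avoiding $\mathcal{C}_i$ onto the subsets containing it, that the binomial weight transforms correctly under $|S|=|T|-1$, and that the coefficients $\binom{n-1}{|S|-1}^{-1}$ and $\binom{n-1}{|S|}^{-1}$ are well-defined on their respective index ranges (the former needing $|S|\ge 1$, which holds whenever $\mathcal{C}_i\in S$, and the latter needing $|S|\le n-1$, which holds whenever $\mathcal{C}_i\notin S$). This bookkeeping is the main, if modest, obstacle; everything else is immediate from the definitions.
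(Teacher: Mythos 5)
Your proof is correct and is essentially the same argument as the paper's: the paper expands $(Av)_i$ and applies the change of variables $S' = S \setminus \mathcal{C}_i$ to recover the Shapley formula, while you run the identical computation in the reverse direction with the inverse substitution $T = S \cup \mathcal{C}_i$. The splitting into positive/negative parts, the bijection between subsets containing and avoiding $\mathcal{C}_i$, and the binomial-weight bookkeeping are all the same.
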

\begin{proof}
    For each data contributor $i$, we have
    \begin{align}
        \beta_i
            &= \sum_{S \in 2^{\mathcal{D}}} A[i, S] \cdot v[S] \\
            &= \sum_{S: C_i \in S} \frac{1}{n} \binom{n - 1}{|S| - 1}^{-1} \mathcal{F}(\theta_S) - \sum_{S': C_i \notin S'} \frac{1}{n} \binom{n - 1}{|S'|}^{-1} \mathcal{F}(\theta_{S'}) \\
            &= \sum_{S' \subseteq \mathcal{D} \setminus C_i} \frac{1}{n} \binom{n - 1}{|S'|}^{-1} \mathcal{F}(\theta_{S' \cup C_i}) - \sum_{S' \subseteq \mathcal{D} \setminus C_i} \frac{1}{n} \binom{n - 1}{|S'|}^{-1} \mathcal{F}(\theta_{S'}) \\
            &= \frac{1}{n} \sum_{S' \subseteq \mathcal{D} \setminus C_i} \binom{n - 1}{|S'|}^{-1} \left(\mathcal{F}(\theta_{S' \cup C_i}) - \mathcal{F}(\theta_{S'}) \right),
    \end{align}
    which is the Shapley value for contributor $i$. The third equality follows from the change of variable $S' = S \setminus C_i$ such that $|S'| = |S| - 1$.
\end{proof}
Finally, we prove \Cref{cor:shapley_value_approx} as follows.
\begin{proof}[Proof of \Cref{cor:shapley_value_approx}]
    Based on \Cref{lemma:shapley_value_operator}, $\Tilde{\beta}^{\text{ft}}_k = A \Tilde{v}^{\text{ft}}_k$, and $\beta^* = A v^*$, where $\Tilde{v}^{\text{ft}}_k[S] = \mathcal{F}(\Tilde{\theta}^{\text{ft}}_{S, k})$ and $v^*[S] = \mathcal{F}(\theta^*_S)$ for all $S \in 2^{\mathcal{D}}$. Hence we have
    \begin{align}
        \E[\norm{2}{\Tilde{\beta}^{\text{ft}}_k - \beta^*}]
            &= \E[\norm{2}{A \Tilde{v}^{\text{ft}}_k - A v^*}] \\
            &= \E \sqrt{\sum_{i=1}^n \left(A[i]^\top (\Tilde{v}^{\text{ft}}_k - v^*) \right)^2} \\
            &\le \E \sqrt{\sum_{i=1}^n \norm{1}{A[i]}^2 \cdot \norm{\infty}{\Tilde{v}^{\text{ft}}_k - v^*}^2} \\
            &= \E \norm{\infty}{\Tilde{v}^{\text{ft}}_k - v^*} \cdot \sqrt{\sum_{i=1}^n \norm{1}{A[i]}^2},
    \end{align}
    where the inequality follows from Holder's inequality.

    For each $i = 1, ..., n$, we have
    \begin{align}
        \norm{1}{A[i]}
            &= \sum_{S \in 2^{\mathcal{D}}} |A[i, S]| \\
            &= \sum_{z=1}^n \sum_{S: C_i \in S, |S| = z} \frac{1}{n} \binom{n-1}{z-1}^{-1} + \sum_{z=0}^{n-1} \sum_{S: C_i \notin S, |S| = z} \frac{1}{n} \binom{n-1}{z}^{-1} \\
            &= \sum_{z=1}^{n} \frac{1}{n} \binom{n-1}{z-1}^{-1} \cdot \binom{n-1}{z-1} + \sum_{z=0}^{n-1} \frac{1}{n} \binom{n-1}{z}^{-1} \cdot \binom{n-1}{z} \\
            &= \frac{1}{n} \cdot n  + \frac{1}{n} \cdot n = 2,
    \end{align}
    where the third equality follows from counting the number of subsets. Hence, we have
    \begin{align}
        \E[\norm{2}{\Tilde{\beta}^{\text{ft}}_k - \beta^*}]
            &\le 2 \sqrt{n} \cdot \E \norm{\infty}{\Tilde{v}^{\text{ft}}_k - v^*} \\
            &= 2 \sqrt{n} \cdot \E \left[\max_{S \in 2^{\mathcal{D}}} |\mathcal{F}(\Tilde{\theta}^{\text{ft}}_{S, k}) - \mathcal{F}(\theta^*_S)| \right] \\
            &\le 2 \sqrt{n} \cdot \left(\E [\max_{S \in 2^{\mathcal{D}}}|\mathcal{F}(\Tilde{\theta}^{\text{ft}}_{S, k}) - \mathcal{F}(\Tilde{\theta}^*_{S})| ] + \E [\max_{S \in 2^{\mathcal{D}}}|\mathcal{F}(\Tilde{\theta}^*_{S}) - \mathcal{F}(\theta^*_S)| ] \right) \\
            &= 2 \sqrt{n} \cdot \E\left[ \max_{S \in 2^{\mathcal{D}}} |\mathcal{F}(\Tilde{\theta}^*_S) - \mathcal{F}(\theta^*_S)|\right],
    \end{align}
    as $k \to \infty$, where the last equality follows the same steps as in \Cref{prop:model_behavior_approx}. Finally, with the assumption that $\E[ \max_{S \in 2^{\mathcal{D}}} |\mathcal{F}(\Tilde{\theta}^*_S) - \mathcal{F}(\theta^*_S)| ] \le C$, we have
    \begin{equation}
        \E[\norm{2}{\Tilde{\beta}^{\text{ft}}_k - \beta^*}] \le 2\sqrt{n} C.
    \end{equation}
\end{proof}

\section{Datasets and Model Properties}\label{apx:datasets}

\textbf{CIFAR-20 ($\bf{32\times 32}$).} 
The full CIFAR-100 dataset\footnote{\url{https://www.cs.toronto.edu/~kriz/cifar.html}} contains 50,000 training samples across 100 classes, with each class labeled by a labeler \citep{krizhevsky2009learning}. These labelers were compensated to ensure the removal of initially mislabeled images. For computational feasibility in the LDS evaluations, we follow a similar approach in \citet{zheng2023intriguing} to create a smaller subset, CIFAR-20. This subset includes 10,000 training samples randomly selected from four superclasses in CIFAR-100: "large carnivores", "flowers", "household electrical devices", and "vehicles 1". For Inception Score calculation, we generate 10,240 samples and use Inception Score Pytorch \footnote{\url{https://github.com/sbarratt/inception-score-pytorch}}.

\textbf{CelebA-HQ ($\bf{256\times 256}$).} The original CelebA-HQ dataset \citep{karras2018progressive} comprises of 30,000 face images with 6,217 unique identities.
We downloaded a preprocessed dataset from \url{https://github.com/ndb796/LatentHSJA}, which contain 5,478 images from 307 identities, each with 15 or more images. For our experiment, we randomly sample 50 identities, totaling 895 images. Also, to match the input dimension required by the implementation of latent diffusion model, we resize the original images from a resolution of $1024\times 1024$ to $256\times 256$. To calculate the diversity score based on \Cref{eq:diversity}, we first generate 1,024 samples from the model trained on the full dataset and extract image embeddings using the pretrained BLIP-VQA image encoder\footnote{\url{https://huggingface.co/Salesforce/blip-vqa-base}} \citep{li2022blip}. These embeddings are then clustered into 20 distinct groups as the reference clusters using the Ward linkage criterion \citep{jh1963hierarchical}, representing groups with demographic characteristics. Subsequently, for models trained on various subsets of celebrities, we generate 1,024 samples and assign them into these 20 reference clusters.

\textbf{ArtBench Post-Impressionism ($\bf{256 \times 256}$).} ArtBench-10 is a dataset for benchmarking artwork generation with 10 art styles, consisting of 5,000 training images per style \citep{liao2022artbench}. For our experiments, we consider the 5,000 training images corresponding to the art style ``post-impressionism" from 258 artists. The aesthetic score predictor\footnote{\url{https://github.com/LAION-AI/aesthetic-predictor}} is a linear model built on top of CLIP to predict the aesthetic quality of images as a proxy to how much people, on average, perceive an image as aesthetically pleasing.

\section{Additional Details on Training and Inference}\label{apx:training_setup}

\paragraph{CIFAR-20 and CelebA-HQ.} For CIFAR-20, we follow the original implementation of the unconditional DDPMs \citep{ho2020denoising} where the model has 35.7M parameters. For CelebA-HQ, we follow the implementation of LDM \citep{rombach2022high} with 274M parameters and a pre-trained VQ-VAE \citep{razavi2019generating}. For both datasets, the maximum diffusion time step is set to $T = 1000$ during training, with a linear variance schedule for the forward diffusion process ranging from $\beta_1 = 10^{-4}$ to $\beta_T = 0.02$. For both CIFAR-20 and CelebA-HQ, we use the Adam optimizer \citep{loshchilov2017decoupled} and apply random horizontal flipping as data augmentation. Both the DDPM and LDM are trained for 20,000 steps with a batch size of 64 and a learning rate of $10^{-4}$. During inference, images are generated using the 100-step DDIM solver \citep{song2020denoising}.

\paragraph{ArtBench (Post-Impressionism).} A Stable Diffusion model\footnote{\url{https://huggingface.co/lambdalabs/miniSD-diffusers}} \citep{rombach2022high} is fine-tuned using LoRA \citep{hu2021lora} with rank $=$ 256, corresponding to 5.1M LoRA parameters. The prompt is set to \textit{"a Post-Impressionist painting"} for each image. The LoRA parameters are trained using the AdamW optimizer \citep{loshchilov2017decoupled} with a weight decay of $10^{-6}$, for 200 epochs and a batch size of 64. Cosine learning rate annealing is used, with 500 warm-up steps and an initial learning rate of $3 \times 10^{-4}$. At inference time, images are generated using the PNDM scheduler with 100 steps \citep{karras2022elucidating}.

For training diffusion models, we utilize NVIDIA GPUs: RTX 6000, A40, and A100, equipped with 24GB, 48GB, and 80GB of memory, respectively. This study employs the Diffusers package version 0.24.0 \footnote{\url{https://pypi.org/project/diffusers/}} to train models across. Per-sample gradients are computed using the techniques outlined in the PyTorch package tutorial (version 2.2.1) \footnote{\url{https://pytorch.org/}}. Furthermore, we apply the TRAK package\footnote{\url{https://trak.csail.mit.edu/quickstart}} to project gradients with a random projection matrix. All experiments are conducted on systems equipped with 64 CPU cores and the specified NVIDIA GPUs.

For CelebA-HQ, we implement two techniques to reduce GPU memory usage during training. First, we pre-compute the embeddings of VQ-VAE for all training samples, since the VQ-VAE is kept frozen during training. This approach allows us to avoid loading the VQ-VAE module into GPU memory during training. Second, we use the 8-bit Adam optimizer implemented in the bitsandbytes Python package \citep{dettmers2022bit}.

\section{Empirical Verification of Proposition 1 and Corollary 1}\label{apx:emp_verification}

\Cref{prop:model_behavior_approx} suggests that increasing the number of fine-tuning steps can lead to a better approximation of the retrained model property $\mathcal{F}(\theta^*_{S_j})$ using the sparsified fine-tuned model property  $\mathcal{F}(\Tilde{\theta}^{\text{ft}}_{S_j, k})$. Here, we empirically measure the similarity between $\mathcal{F}(\theta^*_{S_j})$ and  $\mathcal{F}(\Tilde{\theta}^{\text{ft}}_{S_j, k})$, with 100 subsets $S_j$ sampled from the Shapley kernel and varying the number of sparsified fine-tuning steps $k$. As \Cref{prop:model_behavior_approx} provides an upper bound on the estimation error, the possibility of a constant shift should be considered\footnote{For example, $\E[|\mathcal{F}(\Tilde{\theta}^{\text{ft}}_{S_j, k}) - \mathcal{F}(\theta^*_{S_j})|] \le B$ still holds when $\mathcal{F}(\Tilde{\theta}^{\text{ft}}_{S_j, k}) = \mathcal{F}(\theta^*_{S_j}) + B'$ for some $-B \le B' \le B$}. Therefore, we focus on the Pearson correlation between $\{\mathcal{F}(\theta^*_{S_j})\}_{j=1}^{100}$ and $\{\mathcal{F}(\Tilde{\theta}^{\text{ft}}_{S_j, k})\}_{j=1}^{100}$ as the similarity metric. As shown in \Cref{fig:model_behavior_pearson_across_steps}, the approximation generally becomes better with more fine-tuning steps.

\begin{figure}[h!]
    \centering
    \includegraphics[width=0.8\linewidth]{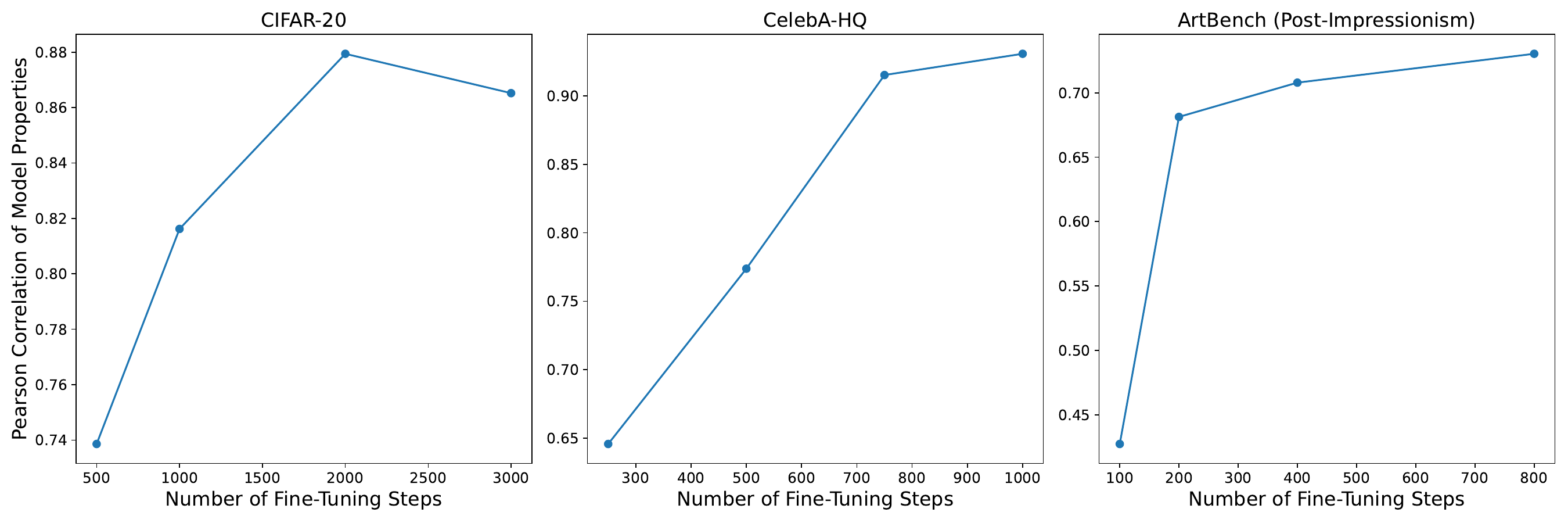}
    \caption{Pearson correlation between model behaviors evaluated with retraining vs. sparsified fine-tuning, with varying number of fine-tuning steps. Models are retrained or fine-tuned on 100 contributor subsets sampled from the Shapley kernel.}\label{fig:model_behavior_pearson_across_steps}
\end{figure}

\Cref{cor:shapley_value_approx} suggests that increasing the number of fine-tuning steps can also lead to a better approximation of the retrained Shapley values (denoted as $\beta^*$) using Shapley values estimated through sparsified fine-tuning (denoted as $\Tilde{\beta}^{\text{ft}}_k$). Here, we compare the correlation between $\beta^*$ and $\Tilde{\beta}^{\text{ft}}_k$, both estimated with 500 contributor subsets, with varying number of fine-tuning steps $k$. As shown in \Cref{fig:shapley_value_pearson_across_steps}, the approximation tends to become better with more fine-tuning steps.

\begin{figure}[h!]
    \centering
    \includegraphics[width=0.8\linewidth]{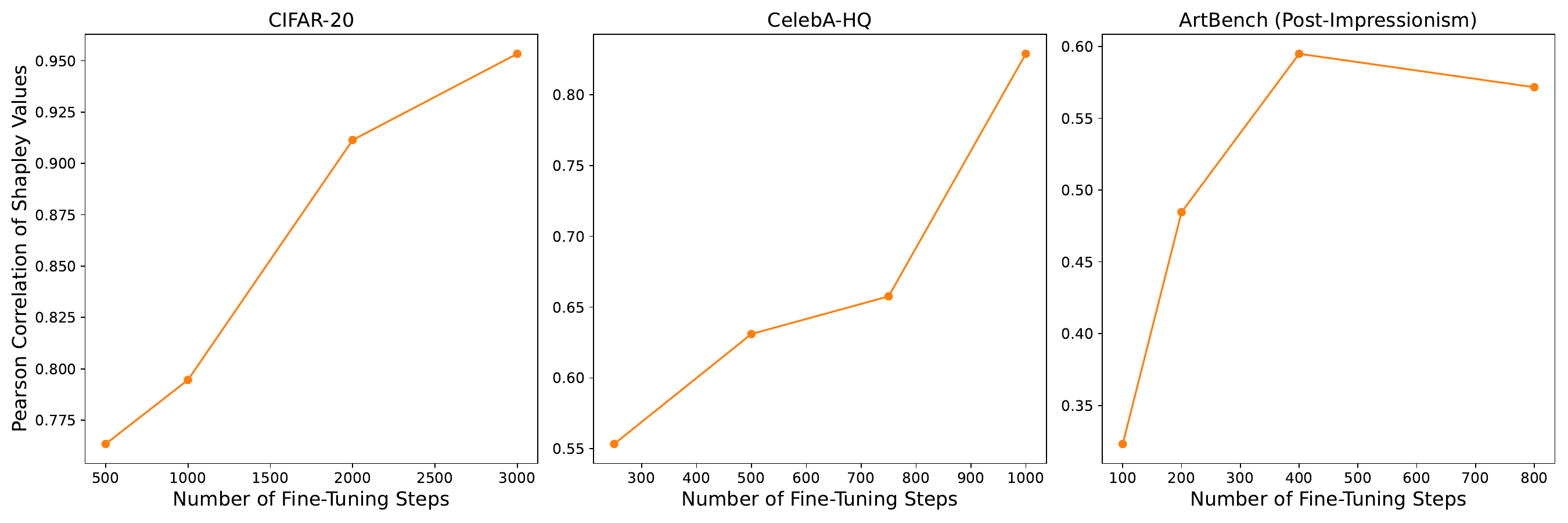}
    \caption{Pearson correlation between the Shapley values estimated with retraining vs. sparsified fine-tuning, with varying number of fine-tuning steps. Shapley values are estimated with 500 contributor subsets sampled from the Shapley kernel.}\label{fig:shapley_value_pearson_across_steps}
\end{figure}

Finally, we note there is a trade-off between better approximation and computational efficiency. As shown in \Cref{fig:runtime_across_steps}, average runtime increases linearly with more fine-tuning steps.

\begin{figure}[h!]
    \centering
    \includegraphics[width=0.8\linewidth]{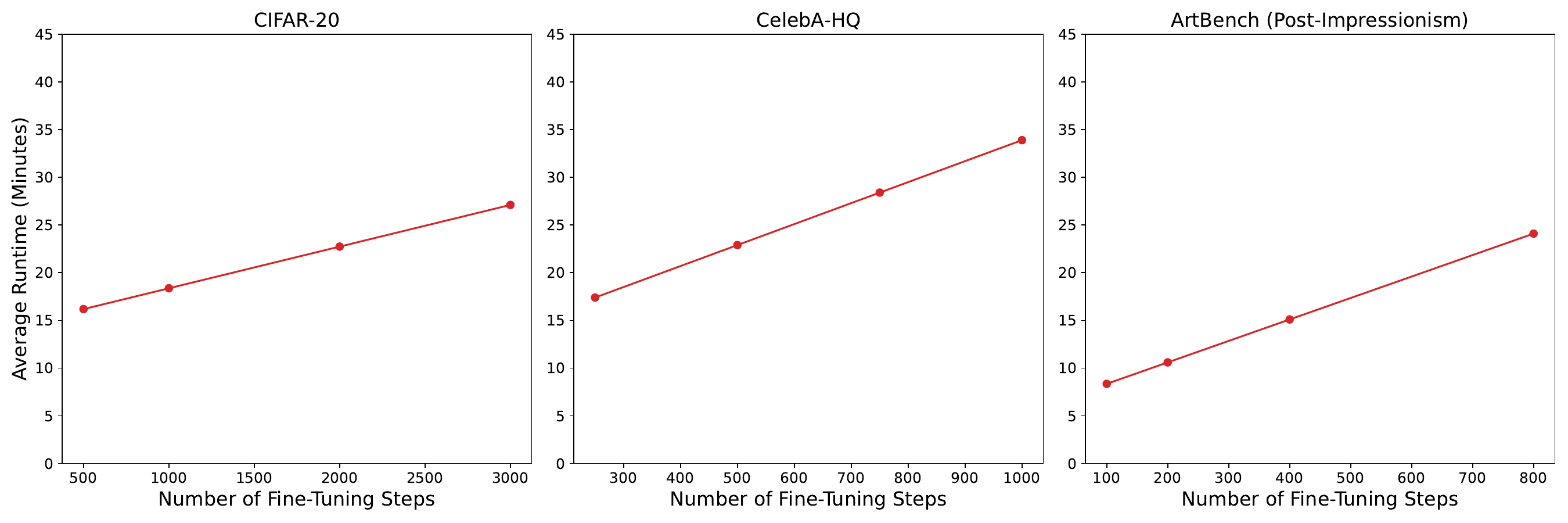}
    \caption{Average runtime of sparsified fine-tuning for the Shapley kernel.}\label{fig:runtime_across_steps}
\end{figure}

\section{Data Attribution Methods for Diffusion Models} \label{apx:baselines}
Here we provide definitions and implementation details of the baselines used in \Cref{sec:exp}. For TRAK-based approaches, we follow the implementation details in \citet{zheng2023intriguing}\footnote{\url{https://github.com/sail-sg/D-TRAK/}}.

\textbf{Raw pixel similarity.} This is a simple baseline that uses each raw image as the representation and then computes the cosine similarity between the generated sample of interest and each training sample as the attribution score.

\textbf{Embeddings distance.}  This method computes the $l_2$ distances between training samples encoded using BLIP-VQA\footnote{\url{https://huggingface.co/Salesforce/blip-vqa-base}}  \citep{li2022blip}, and the average embedding of each reference cluster as the attribution score.

\textbf{CLIP similarity.} This method utilizes CLIP\footnote{\url{https://github.com/openai/CLIP}} \citep{radford2021learning} to encode each sample into an embedding and then compute the cosine similarity between the generated sample of interest and each training sample as the attribution score.

\textbf{Leave-one-out.} Leave-one-out (LOO) evaluates the change in model property due to the removal of a single data contributor $i$ from the training set \citep{koh2017understanding}. Formally, it is defined as:
\begin{equation}
    \tau_{\text{LOO}}(\mathcal{F}, \mathcal{D})_i = \mathcal{F}(\theta^*) - \mathcal{F}(\theta^*_{\mathcal{D} \setminus \mathcal{C}_i }),
\end{equation}
where $\mathcal{F}(\theta^*)$ denotes the global property for the model trained with the full dataset, and $\mathcal{F}(\theta^*_{\mathcal{D} \setminus \mathcal{C}_i })$ represents the global property of the mode trained with the data provided by the $i$th contributor.

\textbf{Gradient similarity.} This is a gradient-based influence estimator \citep{charpiat2019input}, which computes the cosine similarity using the gradient representations of the generated sample of interest, $\Tilde{\rvx}$ and each training sample $\rvx^{(j)}$ as the attribution score:
\begin{equation}
    \frac{\mathcal{P}^{\mathsf{T}}\nabla_{\theta}\mathcal{L}_{\text{Simple}}(\rvx^{(j)}; \theta^*) \cdot \mathcal{P}^{\mathsf{T}}\nabla_{\theta}\mathcal{L}_{\text{Simple}}(\Tilde{\rvx}; \theta^*)}{\Vert \mathcal{P}^{\mathsf{T}}\nabla_{\theta}\mathcal{L}_{\text{Simple}}(\rvx^{(j)}; \theta^*)^{\mathsf{T}}\Vert \Vert \mathcal{P}^{\mathsf{T}}\nabla_{\theta}\mathcal{L}_{\text{Simple}}(\Tilde{\rvx}; \theta^*) \Vert}
\end{equation}

\textbf{TRAK.} \citet{park2023trak} propose an approach that aims to enhance the efficiency and scalability of data attribution for classifiers. \citet{zheng2023intriguing} adapt TRAK for attributing images generated by diffusion models, defined as follows: 
\begin{equation}\label{eq:apx_trak}
    \frac{1}{S}\sum_{s=1}^S \Phi_s \left(\Phi_s^\top \Phi_s + \lambda I \right)^{-1} \mathcal{P}_s^\top \nabla_{\theta}\mathcal{L}_{\text{Simple}}(\Tilde{\rvx}; \theta_{s}^*) ,\text{ and}
\end{equation}
\begin{equation}
    \Phi_s = \left[\phi_s(\rvx^{(1)}) ,\ldots, \phi_s(\rvx^{(N)}) \right]^\top, \text{ where }  \phi_s(\rvx) = \mathcal{P}_s^\top \nabla_{\theta}\mathcal{L}_{\text{Simple}}(\rvx; \theta_{s}^*), 
\end{equation}
where $\mathcal{L}_{\text{Simple}}$ is the diffusion loss used during training, $\mathcal{P}_s$ is a random projection matrix, and $\lambda I$ serves for numerical stability and regularization.

\textbf{D-TRAK.} \citet{zheng2023intriguing} find that, compared to the theoretically motivated setting where the loss function is defined as $\mathcal{L} = \mathcal{L}_{\text{Simple}}$, using alternative functions—such as the squared loss $\mathcal{L}_{\text{Square}}$—to replace both the loss function and local model property result in improved performance:
\begin{equation}\label{eq:d_trak}
    \frac{1}{S}\sum_{s=1}^S \Phi_s \left(\Phi_s^\top \Phi_s + \lambda I \right)^{-1} \mathcal{P}_s^\top \nabla_{\theta}\mathcal{L}_{\text{Square}}(\Tilde{\rvx}; \theta_{s}^*) ,\text{ and}
\end{equation}
\begin{equation}
    \Phi_s = \left[\phi_s(\rvx^{(1)}) ,\ldots, \phi_s(\rvx^{(N)}) \right]^\top, \text{ where }  \phi_s(\rvx) = \mathcal{P}_s^\top \nabla_{\theta}\mathcal{L}_{\text{Square}}(\rvx; \theta_{s}^*). 
\end{equation}


\textbf{Journey-TRAK.} \cite{georgiev2023journey} focus on attributing noisy images $\rvx_t$ during the denoising process. It attributes training data not only for the final generated sample but also for the intermediate noisy samples throughout the denoising process. At each time step, the reconstruction loss is treated as a local model property. Following \citep{zheng2023intriguing}, we compute attribution scores by averaging over the generation time steps, as shown below:
\begin{equation}\label{eq:jorneu_trak}
    \frac{1}{T'}\frac{1}{S}\sum_{t=1}^{T'}\sum_{s=1}^S \Phi_s \left(\Phi_s^\top \Phi_s + \lambda I \right)^{-1} \mathcal{P}_s^\top \nabla_{\theta}\mathcal{L}^t_{\text{Simple}}(\Tilde{\rvx}_t; \theta_{s}^*) ,\text{ and}
\end{equation}
\begin{equation}
    \Phi_s = \left[\phi_s(\rvx^{(1)}) ,\ldots, \phi_s(\rvx^{(N)}) \right]^\top, \text{ where }  \phi_s(\rvx) = \mathcal{P}_s^\top \nabla_{\theta}\mathcal{L}_{\text{Simple}}(\rvx; \theta_{s}^*). 
\end{equation}
Here, $\mathcal{L}^t_{\text{Simple}}$ is the diffusion loss restricted to the time step $t$ (hence without taking the expectation over the time step).

In practice, we follow the main implementation by \citet{zheng2023intriguing} that considers TRAK, D-TRAK, and Journey-TRAK as retraining-free methods. That is, $S = 1$, and $\theta^*_1 = \theta^*$ is the original diffusion model trained on the entire dataset.


\textbf{Relative Influence.} Proposed by \citet{barshan2020relatif}, the $\theta$-relative influence functions estimator normalizes the influence functions estimator of \citet{koh2017understanding} by the HVP magnitude. Following \citet{zheng2023intriguing}, we combine this estimator with TRAK dimension reudction. The attribution for each training sample $\rvx^{(j)}$ is as follows:
\begin{equation} 
    \frac{\mathcal{P}^{\mathsf{T}}
    \nabla_{\theta}\mathcal{L}_{\text{Simple}}(\Tilde{\rvx}; \theta^*) \cdot \left( {\Phi_{\text{TRAK}}}^{\mathsf{T}} \cdot \Phi_{\text{TRAK}} + \lambda I\right)^{-1} \cdot \mathcal{P}^{\mathsf{T}} \nabla_{\theta}\mathcal{L}_{\text{Simple}}(\rvx^{(j)}; \theta^*)}{\left\Vert \left( {\Phi_{\text{TRAK}}}^{\mathsf{T}} \cdot \Phi_{\text{TRAK}} + \lambda I\right)^{-1} \cdot\mathcal{P}^{\mathsf{T}}\nabla_{\theta}\mathcal{L}_{\text{Simple}}(\rvx^{(j)}; \theta^*) \right\Vert},
\end{equation}
and
\begin{equation}
    \Phi_{\text{TRAK}} = \left[\phi(\rvx^{(1)}) ,\ldots, \phi(\rvx^{(N)}) \right]^\top, \text{ where }  \phi(\rvx) = \mathcal{P}^\top \nabla_{\theta}\mathcal{L}_{\text{Simple}}(\rvx; \theta^*). 
\end{equation}

\textbf{Renormalized Influence.} Introduced by \citet{hammoudeh2022identifying}, this method renormalizes the influence functions by the magnitude of the training sample’s gradients. Similar to relative influence, we made an adaptation following \citet{zheng2023intriguing}. The attribution for each training sample $\rvx^{(j)}$ is as follows:
\begin{equation}
    \frac{\mathcal{P}^{\mathsf{T}}
    \nabla_{\theta}\mathcal{L}_{\text{Simple}}(\Tilde{\rvx}; \theta^*) \cdot \left( {\Phi_{\text{TRAK}}}^{\mathsf{T}} \cdot \Phi_{\text{TRAK}} + \lambda I\right)^{-1} \cdot \mathcal{P}^{\mathsf{T}}\nabla_{\theta}\mathcal{L}_{\text{Simple}}(\rvx^{(j)}; \theta^*)}{\Vert \mathcal{P}^{\mathsf{T}}\nabla_{\theta}\mathcal{L}_{\text{Simple}}(\rvx^{(j)}; \theta^*) \Vert},
\end{equation}
and
\begin{equation}
    \Phi_{\text{TRAK}} = \left[\phi(\rvx^{(1)}) ,\ldots, \phi(\rvx^{(N)}) \right]^\top, \text{ where }  \phi(\rvx) = \mathcal{P}^\top \nabla_{\theta}\mathcal{L}_{\text{Simple}}(\rvx; \theta^*). 
\end{equation}
    
For TRAK-based methods and gradient similarity, we compute gradients using the final model checkpoint, meaning that $S=1$ in \Cref{eq:apx_trak}. We select 100 time steps evenly spaced within the interval $[1, T]$ for computing the diffusion loss. At each time step, we introduce one instance of noise. The projection dimensions are set to $k = 4096$ for CIFAR-20 and $k = 32768$ for CelebA-HQ and ArtBench (Post-Impressionism). For D-TRAK, we use the best-performing output function $\mathcal{L}_{\text{Square}}$  and choose $\lambda = 5e^{-1}$ as in \citet{zheng2023intriguing}. 

\textbf{TracIn.} \citet{pruthi2020estimating} propose to attribute the training sample’s influence based on first-order approximation with saved checkpoints during the training process. However, this is also a major limitation because sometimes it is impossible to obtain checkpoints for models such as Stable Diffusion \citep{rombach2022high}. 

\clearpage
\section{Additional Experiment Results}\label{apx:exp}

In this section, we present additional experiment results, including 1) LDS results across different datamodel alpha and computational budgets, 2) generated images after removing top contributing groups, 3) the performance of alternative unlearning approaches in CIFAR-20, and 4) analysis for top contributors in each dataset.

\begin{table}[h!]
    \caption{Average runtime for data subset in minutes (training $+$ inference) for Shapley values estimated with retraining, fine-tuning (FT), sparsified FT, and LoRA fine-tuning. \\}
    \label{tab:results_computation_time}
    \centering
    \begin{tabular}{lccc}
    \toprule
    Method & CIFAR-20 & CelebA-HQ & ArtBench (Post-Impressionism) \\
    \midrule
    Retrain & 77.4 + 20.8 & 213.4 + 26.5 & 190.6 + 6.4 \\
    FT & 6.06 + 18.9 & 17.5 + 27.0 & 4.4 + 6.4 \\
    Sparsified FT & \textbf{4.37 + 14.0} &  \textbf{11.0 + 11.9} &  \textbf{4.5 + 6.1} \\
    LoRA & 4.21 + 19.0 &  5.5 + 26.7 &  - \\
    \bottomrule
    \end{tabular}
\end{table}

First, we provide the training cost incurred for each method. Utilizing 8 RTX-6k GPUs, the total runtime cost for sparsified-FT Shapley is under one day for each dataset. In contrast, the corresponding runtime costs for retraining-based Shapley are 4.3 days, 10.4 days, and 17.1 days. This highlights that sparsified-FT Shapley is more computationally feasible than retraining-based Shapley.

\subsection{CIFAR-20}
\begin{table}[h!]
    \centering
    \caption{LDS ($\%$) results with $\alpha = 0.25, 0.5, 0.75$ on CIFAR-20, with the Inception Score of 10,240 generated images as the global model property. Means and 95\% confidence intervals across three random initializations are reported.\\}
    \begin{tabular}{lccc}
    \toprule
    \textbf{Method} & $\alpha  = 0.25 $ & $\alpha  = 0.5 $ & $\alpha  = 0.75 $ \\
    \midrule
    Pixel similarity (average) & -10.88 \(\pm\) 9.61 & -11.81 \(\pm\) 4.56 & -20.82 \(\pm\) 5.57 \\
    Pixel similarity (max) & -17.42 \(\pm\) 0.83 & -31.80 \(\pm\) 2.90 & -23.88 \(\pm\) 1.28 \\
    CLIP similarity (average) & -0.34 \(\pm\) 4.65 & -21.27 \(\pm\) 1.37 & -13.13 \(\pm\) 1.96 \\
   CLIP similarity (max) & -1.19 \(\pm\) 15.19 & 11.31 \(\pm\) 0.37 & 16.05 \(\pm\) 7.30 \\
    Gradient similarity (average) & 6.48 \(\pm\) 10.98 & 5.79 \(\pm\) 3.67 & -10.75 \(\pm\) 7.06 \\
    Gradient similarity (max) & 7.22 \(\pm\) 3.88 & -0.89 \(\pm\) 3.17 & -4.82 \(\pm\) 4.71 \\
    \midrule
    Relative IF & 1.14 \(\pm\) 13.51 & 5.23 \(\pm\) 5.50 & 5.21 \(\pm\) 3.78 \\
    Renormalized IF & 9.09 \(\pm\) 13.40 & 11.39 \(\pm\) 6.79 & 7.99 \(\pm\) 4.02 \\
    TRAK & 3.62 \(\pm\) 14.02 & 7.94 \(\pm\) 5.67 & 6.59 \(\pm\) 3.88 \\
    Journey-TRAK & -30.09 \(\pm\) 3.85 & -42.92 \(\pm\) 2.15 & -40.43 \(\pm\) 3.46 \\
    D-TRAK & 21.02 \(\pm\) 6.76 & 10.90 \(\pm\) 1.21 & 21.90 \(\pm\) 5.02 \\
    \midrule
    LOO & 17.01  \(\pm\) 5.29 & 30.66  \(\pm\) 6.11 & 13.64 \(\pm\)4.99 \\
    \midrule
    Sparsified-FT Shapley \textbf{(Ours)} & \textbf{51.24 \(\pm\) 3.39} & \textbf{61.48 \(\pm\) 2.27} & \textbf{59.15 \(\pm\) 4.24} \\
    \bottomrule
    \end{tabular}
\end{table}

\begin{table}[h!]
\centering
\small
\caption{LDS (\%) results on Shapley value with different number of subset $S$.}
\label{tab:transposed_performance_comparison}
\begin{tabular}{ccccccc}
\toprule
\textbf{$\alpha$} & \textbf{Method} & \textbf{$S=100$} & \textbf{$S=200$} & \textbf{$S=300$} & \textbf{$S=400$} & \textbf{$S=500$} \\
\midrule
\multirow{3}{*}{$0.25$} & Retraining & 43.88 (3.97) & 49.95 (3.20)
 & 57.56 (2.52)

 & 62.68 (1.85) & 65.90 (1.52) \\
                        & Sparsified-FT &8.46 (13.41) & 23.92 (6.27) & 33.88 (8.33) & 54.88 (7.44) & 51.24 (3.39) \\
                        & FT & 13.57 (7.09) & 40.01 (8.30) & 28.94 (8.59) & 24.58 (6.07) & 20.57 (3.02) \\
                        & LoRA & 12.00 (0.65) & 29.61 (1.37) & 54.26 (3.38) & 51.55 (6.40) & 61.60 (7.44) \\
\midrule
\multirow{3}{*}{$0.5$}  & Retraining & 48.84 (2.00) & 39.55 (3.17) & 57.24 (2.69) & 66.81 (2.99) & 70.58 (2.05) \\
                        & Sparsified-FT & -5.41 (4.78) & 38.00 (3.06) & 47.74 (1.49) & 63.62 (2.30) & 61.48 (2.27) \\
                        & FT & 13.57 (7.09) & 40.01 (8.30) & 28.94 (8.59) & 24.58 (6.07) & 39.60 (3.03) \\
                        & LoRA & 29.90 (2.96) & 41.41 (1.28)&41.27 (0.59)& 34.20 (0.53) & 30.86 (1.07) \\
\midrule
\multirow{3}{*}{$0.75$} & Retraining & 51.39 (3.21) & 45.85 (3.99) & 61.91 (4.62) & 70.59 (3.13) & 72.07 (4.83) \\
                        & Sparsified-FT & -11.44 (2.26) & 33.98 (0.98) & 48.16 (5.18) & 59.65 (5.34) & 59.15 (4.24) \\
                        & FT & 33.90 (5.88) & 26.09 (8.55) & 45.49 (6.83) & 46.92 (5.96) & 38.51 (4.62) \\
                        & LoRA & 24.55 (2.16)& 31.07 (7.83)& 32.67 (7.94) & 35.01 (7.05) & 35.16 (8.14) \\
\bottomrule
\end{tabular}
\end{table}

\begin{figure}[h!]
    \centering
    \includegraphics[scale=0.27]{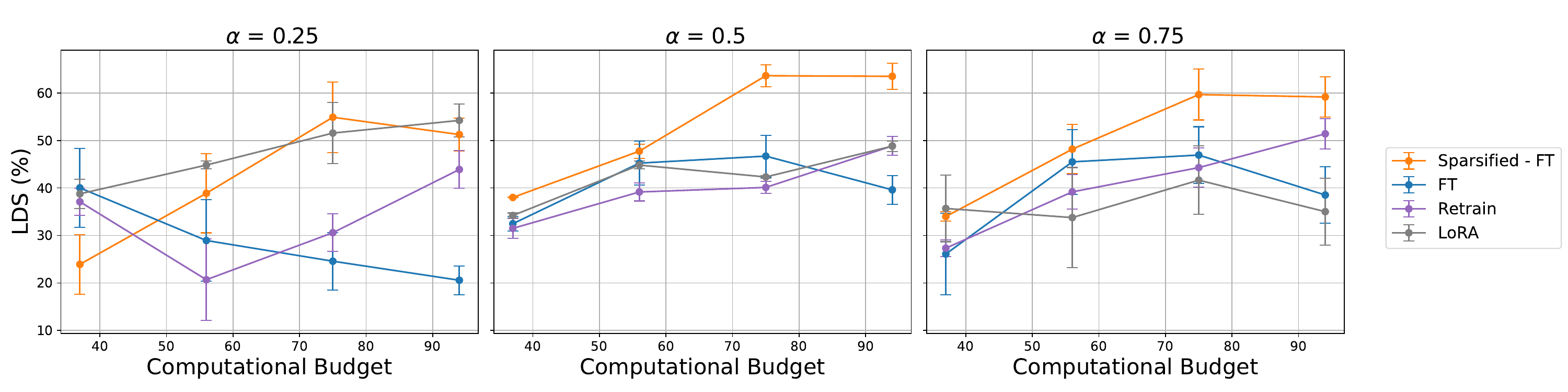}
    \caption{LDS (\%) results on CIFAR-20 for Shapley values estimated with sparsified fine-tuning (FT), finet-tuning (FT), LoRA fine-tuning, and retraining under the same computational budgets (1 unit = runtime to retrain and run inference on a full model).}
    \label{fig:enter-label}
\end{figure}

\begin{table}[h!]
    \centering
    \caption{LDS ($\%$) results for retraining-based attribution across Shapley, Leave-One-Out (LOO), and Banzhaf distributions with $\alpha = 0.25, 0.5, 0.75$ on CIFAR-20. The global model behavior is evaluated using the Inception Score of 10,240 generated images. Means and 95\% confidence intervals across three random initializations are reported. For sparsified fine-tuning (sFT) and fine-tuning (FT), the number of fine-tuning steps is set to 1000. \\}
    \begin{tabular}{lccc}
    \toprule
    \textbf{Method} & $\alpha  = 0.25 $ & $\alpha  = 0.5 $ & $\alpha  = 0.75 $ \\
    \midrule
    LOO (retraining) & 17.01  \(\pm\) 5.29 & 30.66  \(\pm\) 6.11 & 13.64 \(\pm\)4.99 \\
    Banzhaf (retraining) & 10.59  \(\pm\) 8.64 & 37.11  \(\pm\) 2.33 &  42.78  \(\pm\) 1.85 \\
    Shapley (retraining) & \textbf{65.90  \(\pm\) 1.52} & \textbf{70.58  \(\pm\) 2.05} & \textbf{72.07 \(\pm\)4.83} \\
    \midrule
    LOO (FT) & -55.00 \(\pm\) 11.76 &  -66.06 \(\pm\) 2.28 & -54.58  \(\pm\)7.02 \\
    Banzhaf (FT) & -11.20  \(\pm\) 6.42 & 9.09  \(\pm\) 2.67 & 16.79  \(\pm\) 1.12 \\
    Shapley (FT) & \textbf{20.57  \(\pm\) 3.02} & \textbf{39.60  \(\pm\) 3.03} & \textbf{38.51  \(\pm\) 4.62} \\
    \midrule
    LOO (sFT)& 29.45 \(\pm\) 5.96 & 27.43 \(\pm\) 4.20 & 19.58 \(\pm\) 0.35  \\
    Banzhaf (sFT) &  5.44 \(\pm\) 8.59 & 22.55 \(\pm\) 5.07 & 31.44 \(\pm\) 0.27  \\
    Shapley (sFT) & \textbf{51.24 \(\pm\) 3.39} & \textbf{61.48 \(\pm\) 2.27} & \textbf{59.15 \(\pm\) 4.24} \\
    \bottomrule
    \end{tabular}
\end{table}

\begin{figure}[h!]
    \centering
    \begin{subfigure}[b]{0.32\linewidth}
        \centering
        \caption{Sparsified-FT Shapley}
        \includegraphics[scale=0.4]{images/appendix_results/10_gd.pdf}
        \caption{IS = 6.134}
    \end{subfigure}
    \hfill 
    \begin{subfigure}[b]{0.32\linewidth}
        \centering
        \caption{D-TRAK}
        \includegraphics[scale=0.4]{images/appendix_results/10_trak.pdf}
        \caption{IS = 6.160}
    \end{subfigure}
    \hfill 
    \begin{subfigure}[b]{0.32\linewidth}
        \centering
        \caption{CLIP Similarity (Max)}
        \includegraphics[scale=0.4]{images/appendix_results/10_clip.pdf}
        \caption{IS = 6.258}
    \end{subfigure}
    
    
    \begin{subfigure}[b]{0.32\linewidth}
        \centering
        \includegraphics[scale=0.4]{images/appendix_results/20_gd.pdf}
        \caption{IS = 5.604}
    \end{subfigure}
    \hfill 
    \begin{subfigure}[b]{0.32\linewidth}
        \centering
        \includegraphics[scale=0.4]{images/appendix_results/20_trak.pdf}
        \caption{IS = 6.143}
    \end{subfigure}
    \hfill 
    \begin{subfigure}[b]{0.32\linewidth}
        \centering
        \includegraphics[scale=0.4]{images/appendix_results/20_clip.pdf}
        \caption{IS = 6.552}
    \end{subfigure}


    \begin{subfigure}[b]{0.32\linewidth}
        \centering
        \includegraphics[scale=0.4]{images/appendix_results/40_gd.pdf}
        \caption{IS = 5.352}
    \end{subfigure}
    \hfill 
    \begin{subfigure}[b]{0.32\linewidth}
        \centering
        \includegraphics[scale=0.4]{images/appendix_results/40_trak.pdf}
        \caption{IS = 5.872}
    \end{subfigure}
    \hfill 
    \begin{subfigure}[b]{0.32\linewidth}
        \centering
        \includegraphics[scale=0.4]{images/appendix_results/40_clip.pdf}        \caption{IS = 6.365}
    \end{subfigure}
    \caption{Inception scores and example images generated with the same initial noises for DDPMs trained without the 10\%, 20\%, and 40\% (top to bottom) most important CIFAR-20 classes based on sparsified-FT Shapley, D-TRAK, and CLIP similarity (max).}
\end{figure}

\clearpage

\subsection{CelebA-HQ}

\begin{table}[h!]
    \centering
     \caption{LDS ($\%$) results with $\alpha = 0.25, 0.5, 0.75$ on CelebA-HQ, with the cluster entropy of 1,024 generated images as the global model property. Means and 95\% confidence intervals across three random initializations are reported.\\}
    \begin{tabular}{lccc}
    \toprule 
    \textbf{Method} & $\alpha  = 0.25 $ & $\alpha  = 0.5 $ & $\alpha  = 0.75 $ \\
    \midrule
    Pixel similarity (average) & \(-1.52 \pm 3.14\) & \(-8.91 \pm 0.93\) & \(-4.19 \pm 2.25\) \\
    Pixel similarity (max) & \(9.53 \pm 6.35\) & \(21.70 \pm 2.05\) & \(24.76 \pm 0.96\) \\
    Embedding dist. (average) & \(3.59 \pm 6.47\) & \(13.83 \pm 1.12\) & \(10.89 \pm 2.04\) \\
    Embedding dist. (max) & \(9.17 \pm 2.12\) & \(7.32 \pm 3.16\) & \(22.25 \pm 1.87\) \\
    CLIP similarity (average) & \(-8.29 \pm 4.38\) & \(-32.23 \pm 0.87\) & \(-22.29 \pm 2.56\) \\
    CLIP similarity (max) & \(-23.62 \pm 3.51\) & \(-0.93 \pm 3.83\) & \(-8.86 \pm 2.16\) \\
    Gradient similarity (average) & -19.40  \(\pm \) 3.81 & -18.32 \(\pm\)  0.65 &  -21.29 \(\pm\)  1.76 \\
    Gradient similarity (max) & \(-15.98 \pm 2.72\) & \(-12.90 \pm 1.60\) & \(-25.41 \pm 2.42\) \\
    \midrule
    Relative IF & -5.14  \(\pm\) 2.30 &  -1.07  \(\pm \) 0.68 & -5.50  \(\pm \)1.13\\
    Renormalized IF & \(-5.27 \pm 6.15\) & \(10.17  \pm 0.57\) & \(0.39 \pm  1.20\) \\
    TRAK & \(-6.59  \pm 5.04\) & \(3.22  \pm 0.75\) & \(-3.65  \pm 1.24\) \\
    Journey-TRAK & \(-16.82  \pm 3.25\) & \(-2.88  \pm 4.02\) & \(-12.66 \pm  2.27\) \\
    D-TRAK & \(-30.77  \pm 4.26\) & \(-27.23  \pm 2.80\) & \(-23.67  \pm 1.71\) \\
    LOO & \(11.9 \pm 8.32\) & \(-1.22 \pm 6.34\) & \(-8.47 \pm -7.93\) \\
    \midrule     
    Sparsified-FT Shapley \textbf{(Ours)} &10.05 \(\pm\) 5.33 & \textbf{26.34 \(\pm\) 3.34} & 24.08 \(\pm\) 1.52 \\
    \bottomrule
    \end{tabular}
    \label{tab:lds_results}
\end{table}

\begin{table}[h!]
\centering
\small
\caption{LDS (\%) results on Shapley value with different number of subsets $S$}
\label{tab:detailed_performance_metrics}
\begin{tabular}{cccccccccc}
\toprule
\textbf{$\alpha$} & \textbf{Method} & \textbf{$S=100$} & \textbf{$S=200$} & \textbf{$S=300$} & \textbf{$S=400$} & \textbf{$S=500$} \\
\midrule
\multirow{3}{*}{0.25} & Retraining & 3.54 (0.59) & 8.91 (2.61) & 16.77 (4.75) & 15.58 (6.23) & 31.52 (5.56) \\
                      & Sparsified - FT & 9.95 (5.77) & 15.93 (6.29) & 9.90 (8.17) & 16.12 (6.40) & 10.05 (5.33) \\
                      & FT & -10.52 (4.26) & 2.77 (5.79) & 12.29 (4.50) & 9.86 (4.84) & 13.91 (3.91) \\
                      & LoRA & 3.07 (1.17) & 19.03 (9.47)& 26.20 (9.73) & 27.17 (7.72) & 31.09 (7.72) \\
\midrule
\multirow{3}{*}{0.5} & Retraining & 19.01 (1.60) & 28.43 (2.71) & 35.06 (3.48) & 23.48 (2.95) & 28.58 (3.91) \\
                     &Sparsified - FT & 20.73 (2.89) & 26.83 (3.96) & 12.71 (2.06) & 27.23 (2.94) & 26.34 (3.42) \\
                     & FT & 1.82 (3.36) & 9.25 (2.59) & 12.17 (2.83) & 8.63 (0.52) & 13.77 (1.42) \\
                     & LoRA & 10.66 (3.56) & 20.12 (2.72)& 13.99 (1.13) & 21.12 (0.85) & 22.37 (0.82) \\
\midrule
\multirow{3}{*}{0.75} & Retraining & 21.14 (0.93) & 26.29 (1.37) & 31.87 (2.85) & 31.60 (2.23) & 32.26 (2.08) \\
                      & Sparsified - FT & 16.05 (0.31) & 29.35 (1.43) & 9.49 (2.03) & 17.70 (1.30) & 24.08 (1.52) \\
                      & FT & -11.02 (1.51) & -12.07 (4.30) & 1.93 (4.82) & 0.96 (4.16) & 7.37 (3.79) \\
                      & LoRA & 6.05 (1.17) & 9.66 (1.43) & -1.92 (0.77) & 9.00 (2.18) & 14.60 (2.33) \\
\bottomrule
\end{tabular}
\end{table}

\begin{figure}[h!]
    \centering
    \includegraphics[scale=0.27]{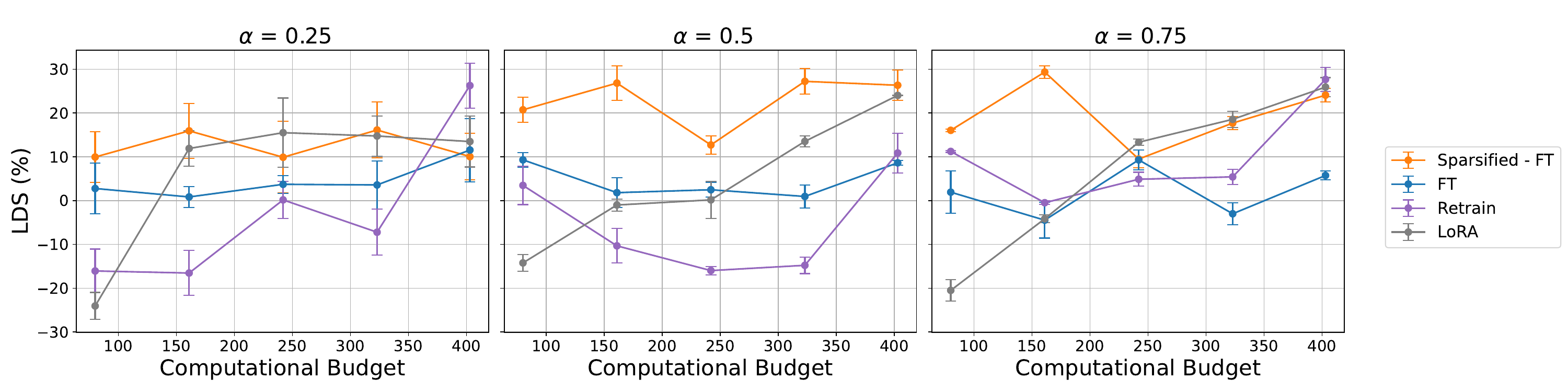}
    \caption{LDS (\%) results on CelebA-HQ for Shapley values estimated with sparsified fine-tuning (FT), fine-tuning (FT), and retraining under the same computational budgets (1 unit = runtime to retrain and run inference on a full model).}
    \label{fig:enter-label}
\end{figure}

\begin{table}[h!]
    \centering
    \caption{LDS ($\%$) results for retraining-based attribution across Shapley, Leave-One-Out (LOO), and Banzhaf distributions with $\alpha = 0.25, 0.5, 0.75$. The global model behavior is calculated with the cluster entropy of 1,024 generated images. Means and 95\% confidence intervals across three random initializations are reported. For sparsified fine-tuning (sFT) and fine-tuning (FT), the number of training steps is set to 500. \\}
    \begin{tabular}{lccc}
    \toprule
    \textbf{Method} & $\alpha  = 0.25 $ & $\alpha  = 0.5 $ & $\alpha  = 0.75 $ \\
    \midrule
    LOO (retraining) & 11.90 \(\pm\) 8.32 & -1.22 \(\pm\) 6.34 & -8.20  \(\pm\)0.27  \\
    Banzhaf (retraining) & 7.87 \(\pm\) 1.31 & 6.79 \(\pm\) 1.27 &  12.54 \(\pm\) 2.60 \\
    Shapley (retraining) & \textbf{31.52  \(\pm\) 5.56} & \textbf{28.58  \(\pm\) 3.91} & \textbf{32.26  \(\pm\)2.08} \\
    \midrule
    LOO (FT) & -2.70  \(\pm\) 1.84 & -10.15  \(\pm\) 1.12 & -10.09  \(\pm\) 1.70 \\
    Banzhaf (FT) &  1.50  \(\pm\)  7.36 & -2.78  \(\pm\)  0.69 & 3.23  \(\pm\)  1.91 \\
    Shapley (FT) & \textbf{13.91  \(\pm\) 3.91} & \textbf{13.77  \(\pm\) 1.42} & \textbf{7.37 \(\pm\) 3.79} \\
    \midrule
    LOO (sFT)& 0.93  \(\pm\) 2.83 & -9.26  \(\pm\) 0.39 & -18.80  \(\pm\) 1.28 \\
    Banzhaf (sFT) &  7.87  \(\pm\) 1.31 &  6.79  \(\pm\) 1.27  & 12.54  \(\pm\) 2.60 \\
    Shapley (sFT) & \textbf{10.05 \(\pm\) 5.33} & \textbf{26.34 \(\pm\) 3.34} & \textbf{24.08 \(\pm\) 1.52 }\\
    \bottomrule
    \end{tabular}
\end{table}

\clearpage
\subsection{ArtBench (Post-Impressionism)}
\begin{table}[h!]
    \centering
    \caption{LDS ($\%$) results with $\alpha = 0.25, 0.5, 0.75$ on ArtBench (Post-Impressionism), with the 90th percentil of aesthetic scores for 50 generated images as the global model property. Means and 95\% confidence intervals across three random initializations are reported.\\}
    \begin{tabular}{lccc}
    \toprule
    \textbf{Method} & $\alpha = 0.25$ & $\alpha = 0.5$ & $\alpha = 0.75$ \\
    \midrule
    Pixel similarity (average) & 16.57 (3.16) & 11.24 (0.63) & -1.19 (5.71) \\ 
    Pixel similarity (max) & 16.62 (2.95) & 14.61 (2.72) & 3.43 (10.57) \\ 
    CLIP similarity (average) & -6.03 (1.61) & -6.96 (4.08) & -9.27 (2.74) \\ 
    CLIP similarity (max) & 8.94 (0.43) & -1.75 (4.07) & -1.70 (9.89) \\ 
    Grad similarity (average) & 4.14 (5.14) & 0.25 (1.18) & -7.81 (0.89) \\ 
    Grad similarity (max) & 22.10 (9.02) & 10.48 (3.11) & 1.32 (4.19) \\ 
    \midrule
    Aesthetic score (average) & 24.81 (3.00) & 24.85 (2.30) & 13.21 (7.82) \\ 
    Aesthetic score (max) & 31.73 (3.04) & 21.36 (3.70) & 7.45 (11.66) \\ 
    \midrule
    Relative IF & 3.30 (4.98) & -5.02 (1.77) & -3.77 (12.05) \\ 
    Renormalized IF & -1.71 (4.37) & -11.41 (0.93) & -8.96 (12.86) \\ 
    TRAK & -0.85 (4.74) & -8.18 (1.30) & -6.78 (13.37) \\ 
    Journey-TRAK & -14.10 (5.09) & -11.41 (4.22) & -6.83 (2.80) \\ 
    D-TRAK & 19.37 (3.29) & 11.30 (3.47) & 17.72 (6.87) \\
    \midrule
      Sparsified-FT Shapley \textbf{(Ours)} & \textbf{52.83 (3.58)} & \textbf{61.44 (2.04)} & \textbf{32.24 (10.93)} \\ 
    \bottomrule
    \end{tabular}
    \label{tab:lds_results}
\end{table}

\begin{figure}[h!]
    \centering
    \includegraphics[scale=0.27]{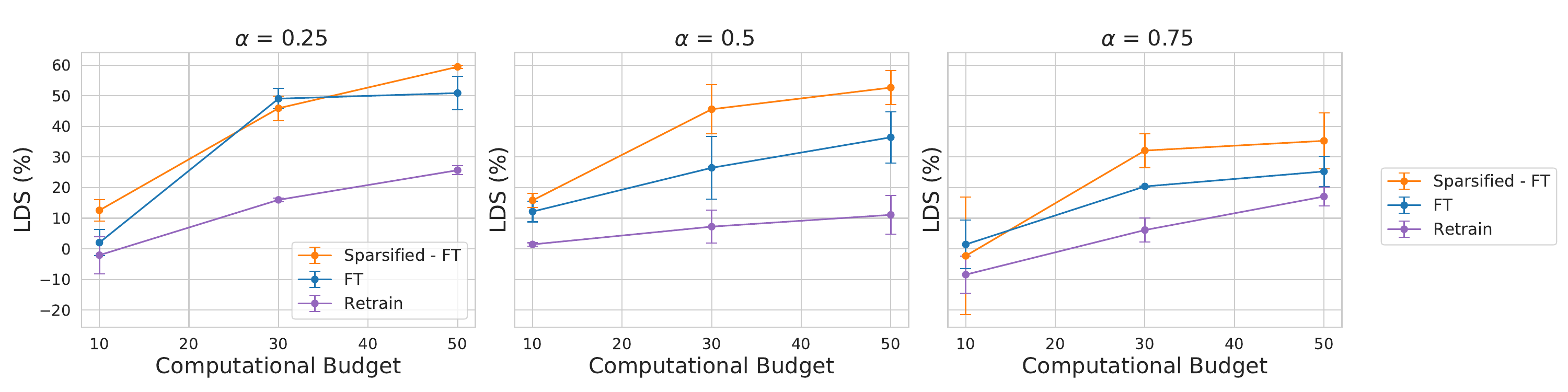}
     \caption{LDS (\%) results with $\alpha = 0.25, 0.5, 0.75$ on ArtBench (Post-Impressionism) for Shapley values estimated with sparsified fine-tuning (FT), fine-tuning (FT), and retraining under the same computational budgets (1 unit = runtime to retrain and run inference on a full model).}
    \label{fig:enter-label}
\end{figure}

\begin{table}[h!]
\centering
\caption{Aesthetic Score (90th percentile of 50 generated images) as the global model behavior. 100 datamodel subsets with alpha = 0.25, 0.5, 0.75 are used for evaluation. Means and 95\% confidence intervals across three random initializations are reported. All Shapley values are estimated with 1,000 subsets.}
\label{tab:aesthetic_score}
\begin{tabular}{cccc}
\toprule
\textbf{$\alpha$} & \textbf{Retrain} & \textbf{Sparsified - FT} & \textbf{FT } \\
\midrule
0.25 & 53.25 (6.75) & 51.15 (7.54) & 52.83 (3.58) \\
0.50 & 52.30 (2.32) & 51.61 (3.21) & 61.44 (2.04) \\
0.75 & 23.78 (14.07) & 33.77 (6.65) & 32.24 (10.93) \\
\bottomrule
\end{tabular}
\end{table}

\begin{table}[h!]
    \centering
    \caption{LDS ($\%$) results with $\alpha = 0.25, 0.5, 0.75$ on ArtBench (Post-Impressionism), with the 90th percentile of aesthetic score for 50 generated images as the global model property. Means and 95\% confidence intervals across three random initializations are reported. For sparsified fine-tuning (sFT) and fine-tuning (FT), the number of fine-tuning steps is set to 200. For Banzhaf and Shapley values, 1000 sbusets are used for estimation. \\}
    \begin{tabular}{lccc}
    \toprule
    \textbf{Method} & $\alpha  = 0.25 $ & $\alpha  = 0.5 $ & $\alpha  = 0.75 $ \\
    \midrule
    LOO (retraining) & 2.69 \(\pm\) 4.57 & 3.74 \(\pm\) 8.00 & -1.13  \(\pm\) 15.78  \\
    Banzhaf (retraining) &  4.70  \(\pm\) 7.28 & 16.19   \(\pm\)1.98  & -10.16   \(\pm\)9.49  \\
    Shapley (retraining) & \textbf{53.25  \(\pm\) 6.75} & \textbf{52.30  \(\pm\) 2.32} & \textbf{23.78  \(\pm\) 14.07} \\
    \midrule
    LOO (FT) & -14.13  \(\pm\) 5.36 & -0.84  \(\pm\) 1.44 & 6.62  \(\pm\) 6.72 \\
    Banzhaf (FT) &  3.55 \(\pm\)  6.99  & 15.30 \(\pm\)  2.09 & -10.79 \(\pm\)  9.40 \\
    Shapley (FT) & \textbf{52.83  \(\pm\) 3.58} & \textbf{61.44  \(\pm\) 2.04} & \textbf{32.24 \(\pm\) 10.93} \\
    \midrule
    LOO (sFT)& 2.94  \(\pm\) 2.64 & 2.84  \(\pm\) 7.33 & 1.00  \(\pm\) 4.45 \\
    Banzhaf (sFT) &   3.81  \(\pm\) 7.16 & 15.51   \(\pm\) 2.15 & -10.43  \(\pm\) 9.16  \\
    Shapley (sFT) & \textbf{51.15 \(\pm\) 7.54} & \textbf{51.61 \(\pm\) 3.21} & \textbf{33.77 \(\pm\) 6.65 }\\
    \bottomrule
    \end{tabular}
\end{table}

\begin{figure}[h!]
    \centering
    \includegraphics[width=\textwidth]{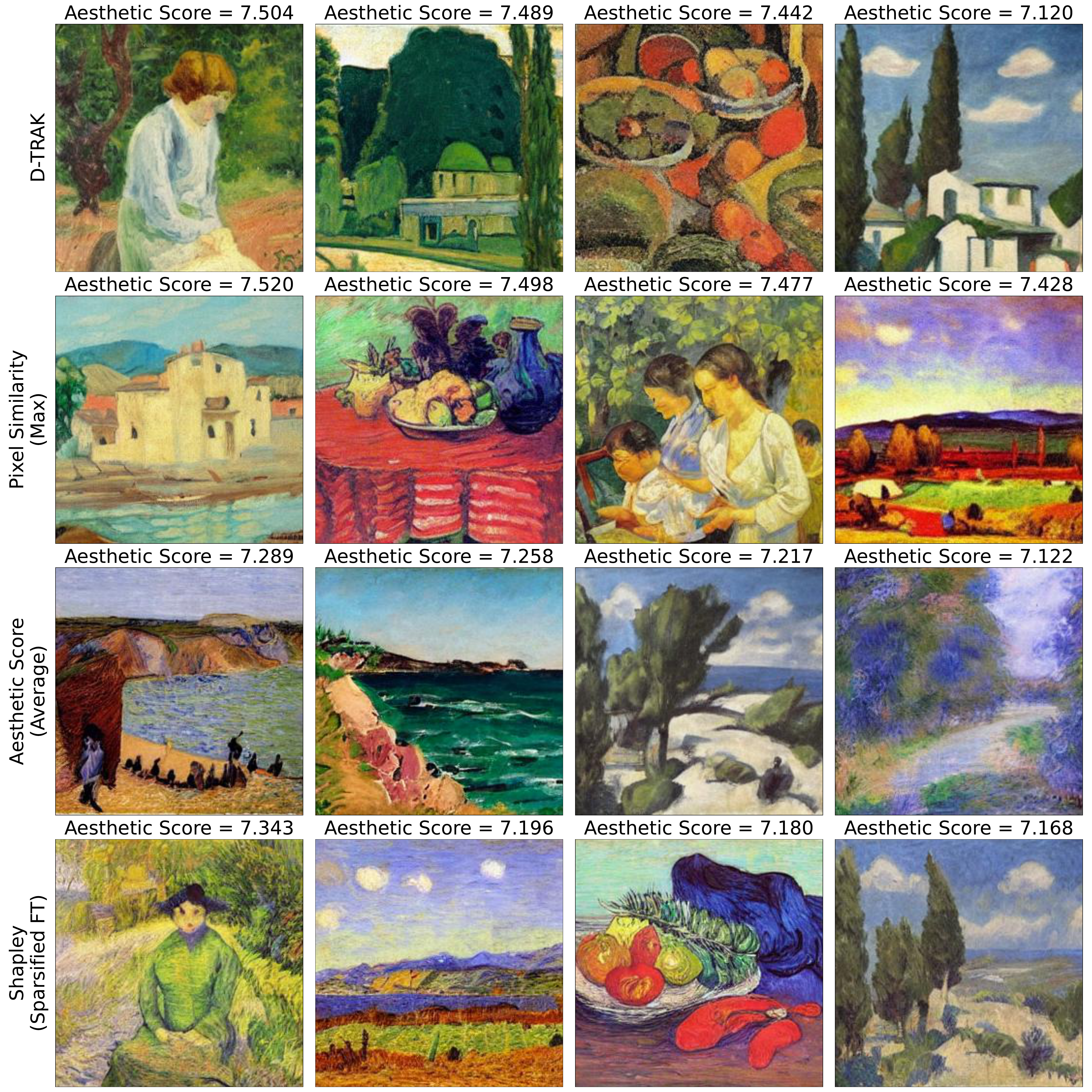}
    \caption{Two generated images above and two generated images below the 90th percentile of aesthetic scores, for Stable Diffusion models LoRA-finetuned without the top 40\% most important artists based on D-TRAK, pixel similarity (max), training image aesthetic score (average), and sparsified-FT Shapley.} \label{fig:artbench_cf_generated_images}
\end{figure}

\begin{figure}[h!]
    \centering
    \includegraphics[width=\textwidth]{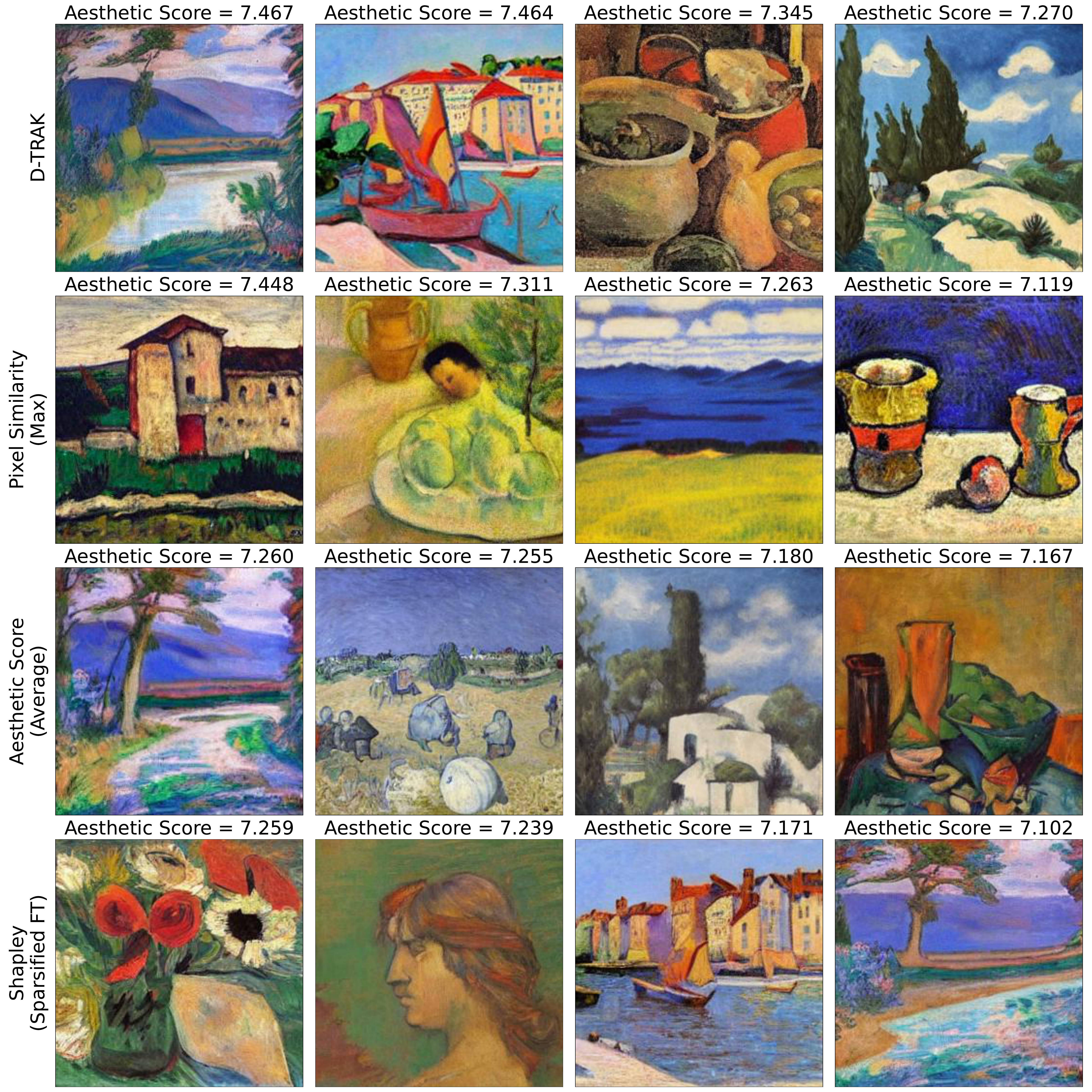}
    \caption{Two generated images above and two generated images below the 90th percentile of aesthetic scores, for Stable Diffusion models LoRA-finetuned without the top 40\% most important artists based on D-TRAK, pixel similarity (max), training image aesthetic score (average), and sparsified-FT Shapley.} \label{fig:artbench_cf_generated_images}
\end{figure}

\clearpage
\subsection{Alternative Unlearning Approaches for Removal}
In this section, we present the performance of unlearning approaches: fine-tuning (FT), gradient ascent (GA) \citep{graves2021amnesiac}, and influence unlearning (IU) \citep{izzo2021approximate}. To evaluate how well these approaches approximate retraining, we randomly selected 100 subsets from the Shapley sampling distribution, and we computed the FID score between the samples generated by the model retrained on each subset vs. models undergone by the unlearning methods. The resulting average FID scores are 25.5, 185.9, and 101.3 for sparsified fine-tuning (FT), sparsified GA, and sparsified IU, respectively (\Cref{tab:unlearning_results}). In contrast, the average FID scores without sparsification are 43.1, 145.2, and 67.7 for fine-tuning (FT), GA, and IU, respectively (\Cref{tab:unlearning_results}). These results indicate that while both GA and IU can effectively unlearn the target set, they significantly degrade overall image quality, \Cref{fig:unlearning_results_1} and \Cref{fig:unlearning_results_2}. In contrast, sparsified FT demonstrates superior performance, producing better image quality than both GA and IU.
\begin{table}[h!]
    \centering
    \caption{FID scores between generated samples from different unlearning approximations and retrained models. Means and 95\% confidence intervals across 100 random sampled subsets from Shapley distribution are reported.}
    \begin{tabular}{l|l}
        \toprule
         Unlearning Method & FID Score \\
        \midrule
        \multicolumn{2}{c}{{With Sparsification}} \\
        \midrule
        Fine-Tuning (FT)                & 25.5 \( \pm \) 10.5 \\
        Gradient Ascent (GA)            & 185.9 \( \pm \) 45.7 \\
        Influence Unlearning (IU)       & 101.3 \( \pm \) 32.1 \\
        \midrule
        \multicolumn{2}{c}{{Without Sparsification}} \\
        \midrule
        Fine-Tuning (FT)                & 43.1 \( \pm \) 10.9 \\
        Gradient Ascent (GA)            & 145.2 \( \pm \) 15.4 \\
        Influence Unlearning (IU)       & 67.7\( \pm \) 15.9 \\
        \bottomrule
    \end{tabular}
    \label{tab:unlearning_results}
\end{table}

\begin{figure}[h!]
    \centering
    \includegraphics[width=1.0\linewidth]{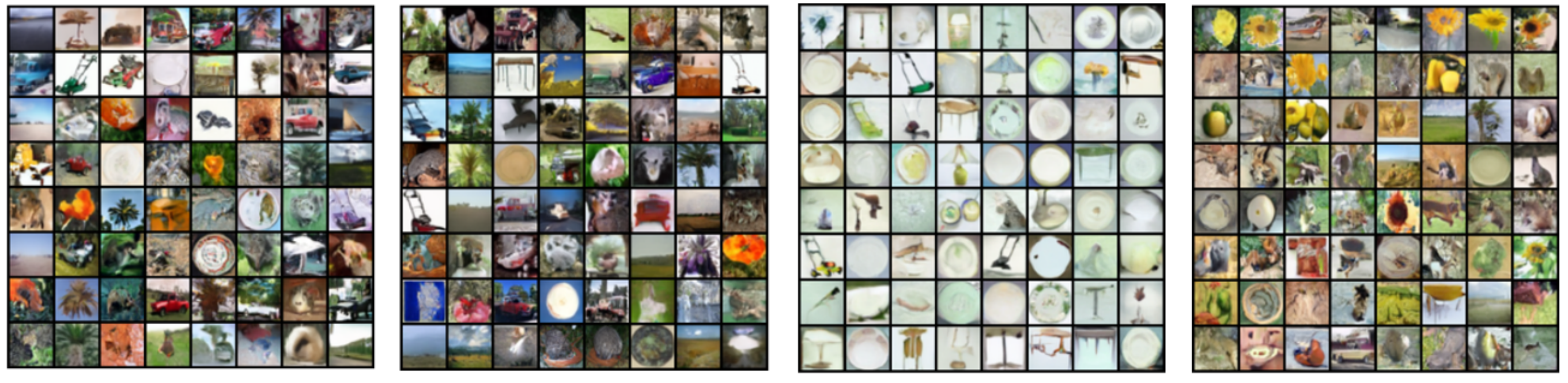}
    \caption{Sample images generated using various unlearning approaches on CIFAR-20 with identical noise inputs: retraining, sparsified fine-tuning (FT), sparsified gradient ascent (GA), and sparsified influence unlearning (IU) (from left to right).}
    \label{fig:unlearning_results_1}
\end{figure}
\begin{figure}[h!]
    \centering
    \includegraphics[width=1.0\linewidth]{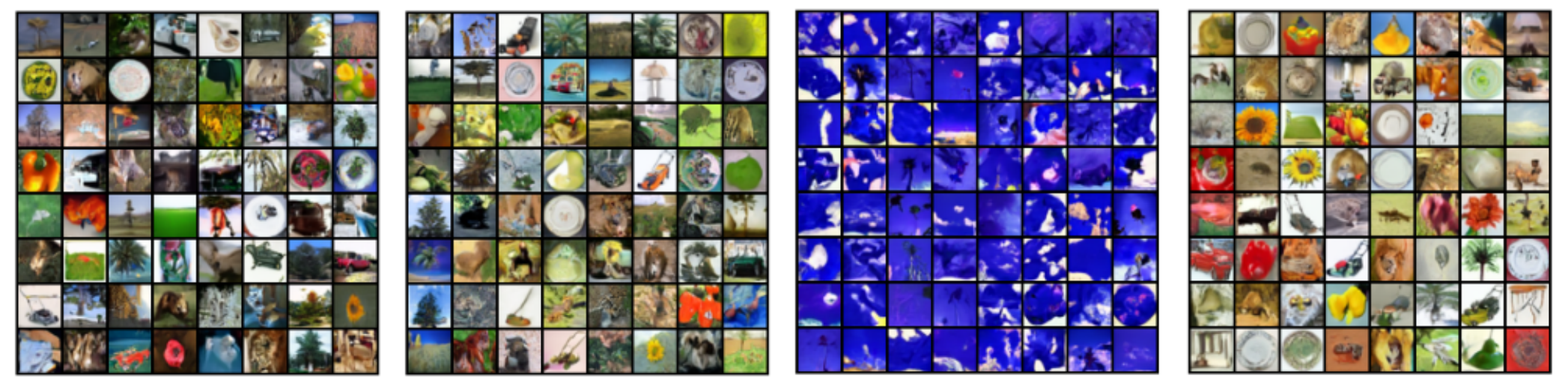}
    \caption{Sample images generated using various unlearning approaches on CIFAR-20 with identical noise inputs: retraining, fine-tuning (FT), gradient ascent (GA), and influence unlearning (IU) (from left to right).}
    \label{fig:unlearning_results_2}
\end{figure}

\clearpage
\subsection{Analysis of Top Contributors}

Here, we present an additional analysis of the top contributors identified by sparsified FT. For CIFAR-20, we compute the entropy of the training images for each class based on the softmax outputs of InceptionNet.  In the case of CelebA-HQ, the top contributors are predominantly drawn from non-majority clusters.

\begin{figure}[h!]
    \centering
    \includegraphics[width=1.0\linewidth]{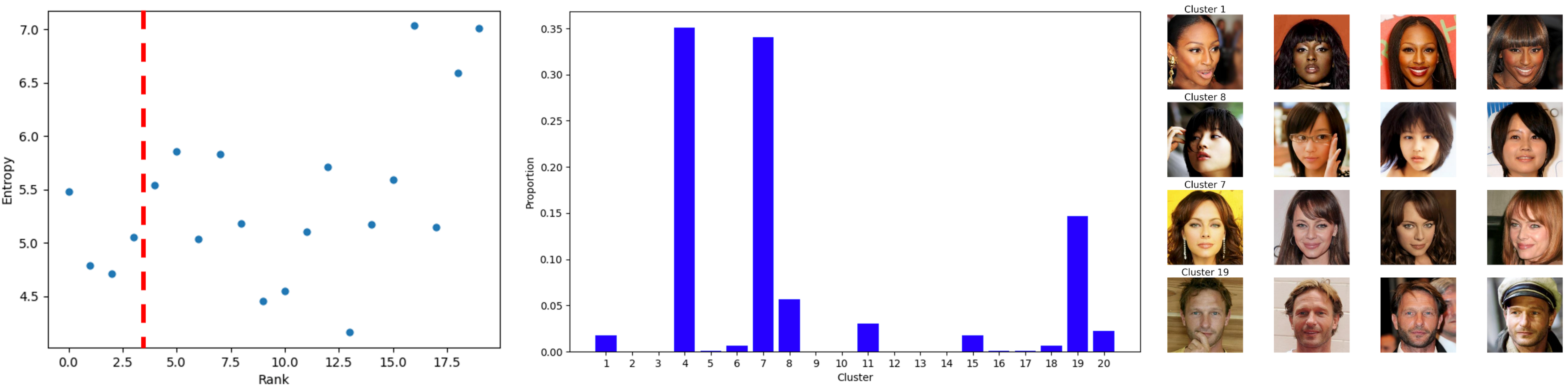}
    \caption{Entropy of each class in CIFAR-20 (left), redline indicates top 20\% groups; Distribution of the demographic clusters in the CelebA-HQ dataset (middle). Top 4 celebrity identities by sparsified-FT Shapley (right). }
    \label{fig:qualitative_results}
\end{figure}

\clearpage
\subsection{Data Quality Distribution}
\begin{figure}[h!]
    \centering
    \includegraphics[width=1.0\linewidth]{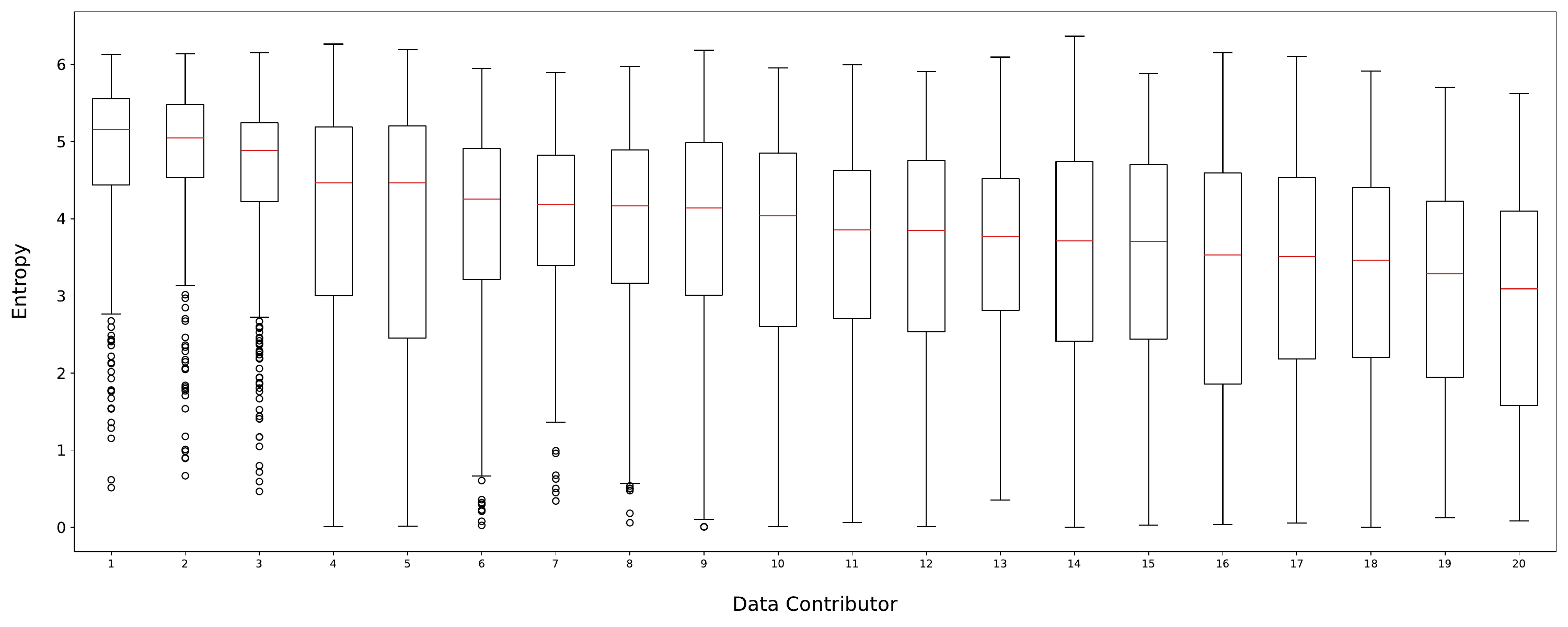}
    \caption{Entropy distributions across data contributors in CIFAR-20.}
\end{figure}

\begin{figure}[h!]
    \centering
    \includegraphics[width=1.0\linewidth]{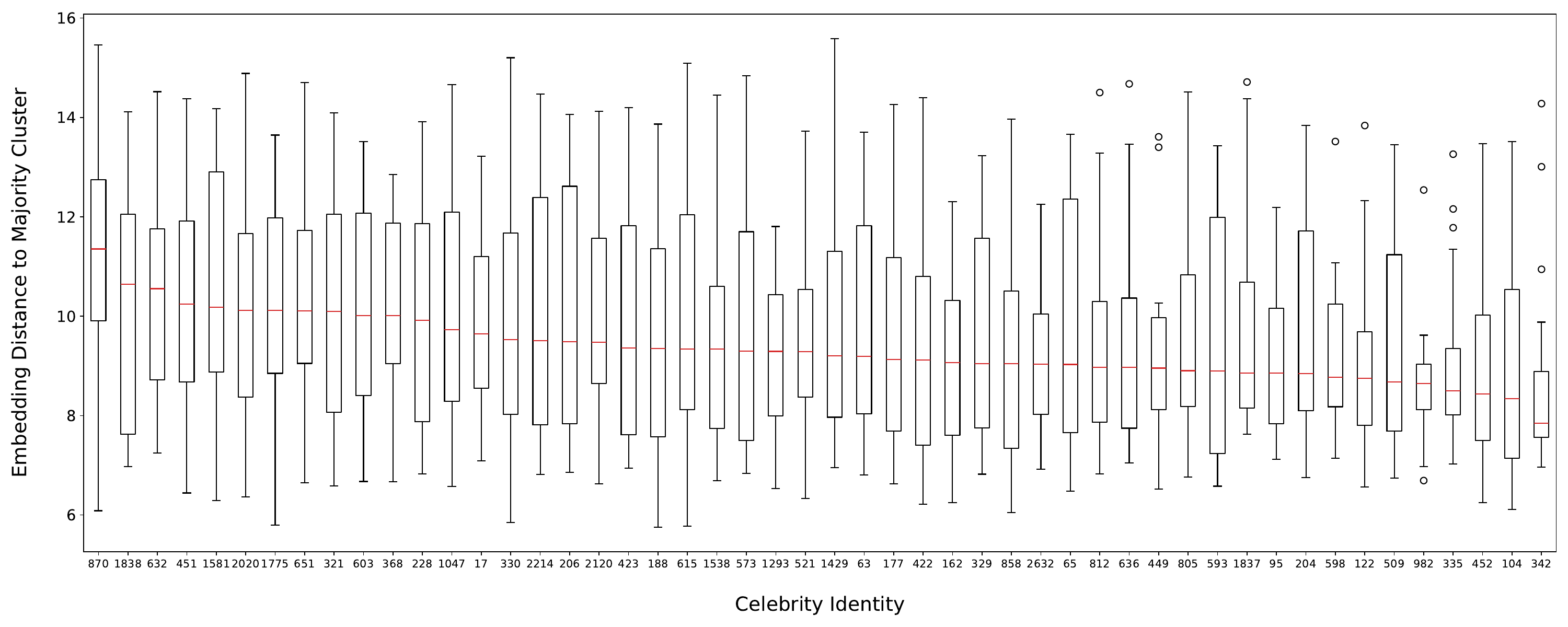}
    \caption{Distributions of embedding distance to the average of majority clusters across celebrity identities in CelebA-HQ.}
\end{figure}

\begin{figure}[h!]
    \centering
    \includegraphics[width=1.0\linewidth]{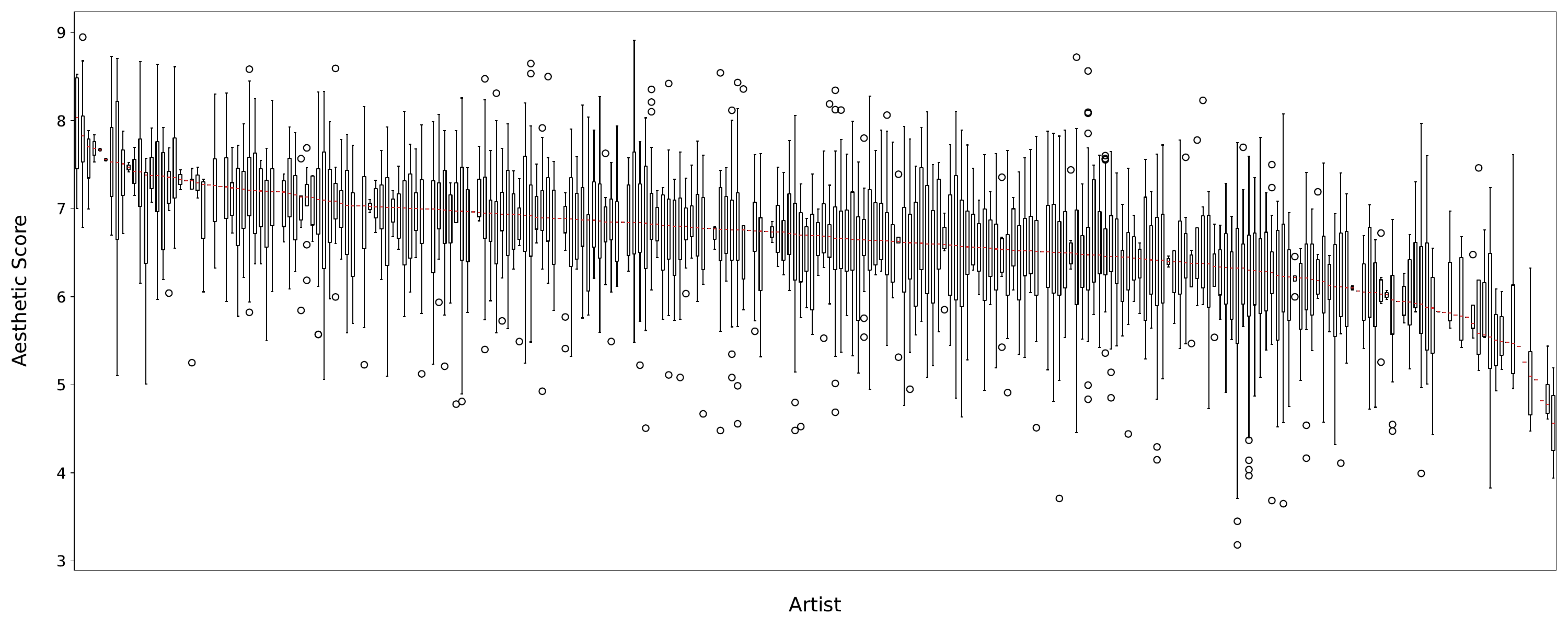}
    \caption{Distributions of aesthetic score across artists in ArtBench (Post-Impressionism).}
\end{figure}

\end{document}